\documentclass{article}

\usepackage{microtype}
\usepackage{graphicx}
\usepackage{subcaption}
\usepackage{booktabs} 

\usepackage{hyperref}

\usepackage[accepted]{icml2026}

\newcommand*{\ShowNotes}{}
\usepackage{algorithm}
\makeatletter
\DeclareRobustCommand\onedot{\futurelet\@let@token\@onedot}
\def\@onedot{\ifx\@let@token.\else.\null\fi\xspace}

\def\ie{\emph{i.e}\onedot}

\makeatother

\usepackage{color}
\usepackage{xspace}
\newcommand{\ours}{MeDS\@\xspace}
\newcommand{\ourso}{MeDS (ours)\xspace}

\usepackage{soul}
\usepackage{multirow}
\usepackage{xcolor}
\usepackage{colortbl}
\usepackage{wrapfig}

\definecolor{darkred}{rgb}{0.7,0.1,0.1}
\definecolor{darkgreen}{rgb}{0.1,0.7,0.1}
\definecolor{cyan}{rgb}{0.7,0.0,0.7}
\definecolor{dblue}{rgb}{0.2,0.2,0.8}
\definecolor{maroon}{rgb}{0.76,.13,.28}
\definecolor{burntorange}{rgb}{0.81,.33,0}
\definecolor{tealblue}{rgb}{0.212,0.459, 0.533}
\definecolor{myyellow}{rgb}{0.8627451 , 0.75294118, 0.20784314]}

\definecolor{mypink}{rgb}{0.93359375, 0.62109375, 0.83984375}

\definecolor{pp}{rgb}{0.43921569, 0.18823529, 0.62745098}
\definecolor{rr}{rgb}{0.5254902 , 0.00784314, 0.12941176}
\definecolor{bb}{rgb}{0.09019608, 0.23529412, 0.37647059}
\definecolor{yy}{rgb}{0.49803922, 0.3372549 , 0.0}
\definecolor{gg}{rgb}{0.02352941, 0.3372549 , 0.17647059}
\definecolor{mybrown}{rgb}{0.87058824, 0.56078431, 0.01960784}
\definecolor{myblue}{rgb}{0.3372549 , 0.70588235, 0.91372549}
\definecolor{mypurple}{rgb}{0.8, 0.47058824, 0.7372549 }
\definecolor{myorange}{rgb}{0.835, 0.368, 0}
\definecolor{mygreen}{rgb}{0.00784314, 0.61960784, 0.45098039}
\definecolor{mygt}{rgb}{0.0078125 , 0.57421875, 0.40625}
\definecolor{mysp}{rgb}{0.84765625, 0.515625  , 0.0234375}
\definecolor{mycitecolor}{rgb}{0,0.08,0.45}
\definecolor{mygr}{rgb}{0.9607,0.9607,0.9607}
\definecolor{myoo}{rgb}{0.992,0.9176,0.9019}

\definecolor{myrr}{HTML}{AE031A}
\definecolor{mybb}{HTML}{0155B3}

\definecolor{myteal}{RGB}{128,180,128}
\definecolor{mylavender}{RGB}{150,120,180}

\definecolor{lightgray}{gray}{0.9}
\definecolor{myblue}{rgb}{0.82, 0.88, 0.94}
\definecolor{ours}{rgb}{1.0, 0.93, 0.78}

\ifdefined\ShowNotes
  \newcommand{\colornote}[3]{{\color{#1}\bf{#2: #3}\normalfont}}
\else
  \newcommand{\colornote}[3]{}
\fi

\usepackage{pifont}
\usepackage{bbm}
\usepackage{float}
\usepackage{caption}
\usepackage{graphicx}
\usepackage{enumitem}
\usepackage{amssymb}
\usepackage{subcaption}
\usepackage{wrapfig}

\definecolor{checkmark}{HTML}{305AFF}
\definecolor{xmark}{HTML}{E62020}
\newcommand{\ccheck}{{\textcolor{checkmark}{\large\checkmark}}}
\newcommand{\ccross}{{\textcolor{xmark}{\large\xmark}}}

\usepackage{amsmath}
\usepackage{amssymb}
\usepackage{mathtools}
\usepackage{amsthm}
\usepackage{dsfont}

\newtheorem{thm}{Theorem}

\newtheorem{lem}{Lemma}
\theoremstyle{definition}
\newtheorem*{defn*}{Definition}
\newtheorem*{ass*}{Assumption}

\usepackage[capitalize,noabbrev]{cleveref}

\usepackage{color, colortbl} 
\usepackage{xcolor,pifont}
\newcommand*\colourcheck[1]{%
  \expandafter\newcommand\csname #1check\endcsname{\textcolor{#1}{\ding{52}}}%
}
\newcommand{\xmark}{\textcolor{red}{\ding{55}}}

\colourcheck{green}

\usepackage[disable,textsize=tiny]{todonotes}

\icmltitlerunning{Memory-Distilled Selection for Noise-Robust Anomaly Detection}

\begin{document}

\twocolumn[
  \icmltitle{Memory-Distilled Selection for Noise-Robust Anomaly Detection
}



  \icmlsetsymbol{equal}{*}
  \icmlsetsymbol{lead}{\textdagger}

  \begin{icmlauthorlist}
    \icmlauthor{Sirojbek Safarov}{equal,aivex}
    \icmlauthor{Jaewoo Park}{equal,lead,aivex}
    \icmlauthor{Yoon Gyo Jung}{equal,neu}
    \icmlauthor{Kuan-Chuan Peng}{merl}
    \icmlauthor{Wonchul Kim}{aivex}
    \icmlauthor{Seongdeok Bang}{aivex}
    \icmlauthor{Octavia Camps}{neu}
  \end{icmlauthorlist}

  \icmlaffiliation{aivex}{AIVEX Inc.}
  \icmlaffiliation{neu}{Northeastern University}
  \icmlaffiliation{merl}{Mitsubishi Electric Research Laboratories (MERL)}

  \icmlcorrespondingauthor{Jaewoo Park}{park.jaewoo@aivexvision.ai}

  \icmlkeywords{Machine Learning, ICML}

  \vskip 0.3in
]



\printAffiliationsAndNotice{\icmlEqualContribution \textsuperscript{\textdagger}Project lead.}

\begin{abstract}
Anomaly detection (AD) under data contamination is critical for deploying unsupervised defect detection in industrial environments, where curating perfectly clean training sets is impractical. However, existing methods are sensitive to contamination, suffering significant performance degradation as the noise ratio increases. In this paper, we propose Memory-Distilled Selection (MeDS), a training algorithm based on data selection. MeDS constructs an ensemble of partial memories via random subsampling, where the resulting sparsity acts as a low-pass filter that captures nominal patterns across a wide range of noise ratios, enabling coarse-level identification of contaminated samples. The aggregated distances to the bootstrapped memories are then distilled into a reconstruction score network, which is subsequently fine-tuned on clean data filtered using scores from the distilled model, enabling fine-grained localization of anomalies. MeDS is robust across a wide range of noise ratios without requiring noise-ratio-specific hyperparameter tuning, achieving 99.16\% image-level AUROC on MVTecAD at a 40\% noise ratio, and attaining state-of-the-art performance on both VisA and Real-IAD under noisy settings. We thoroughly verify the efficacy of MeDS on industrial AD benchmarks under noisy data scenarios, accompanied by in-depth empirical analyses. 
\end{abstract}

\section{Introduction}
In high-throughput manufacturing, vision-based \emph{anomaly detection} (AD) systems enable early defect detection and protect product quality. A common approach trains on images assumed to be normal and flags any deviation at inspection time—a strategy that has seen substantial progress~\citep{zavrtanik2021draem, roth2022towards,defard2021padim,zhang2023unsupervised,wang2024real,li2024vague}. Yet the assumption of a contamination-free training set is often unrealistic: assembling and verifying large-scale industrial data is costly, and even rigorous manual screening inevitably overlooks a non-negligible fraction of anomalous samples. When contaminated data enter the training set, the performance of AD models degrades significantly.

Existing approaches to this contaminated-data setting either (1) impose strong assumptions on the statistical nature of the training data~\citep{chen2022deep, jung2025tailedcore}, (2) assume knowledge of the anomaly ratio in the training set~\citep{jiang2022softpatch}, or (3) assume knowledge of the range of anomaly scores for anomalous samples~\citep{im2025fun}. Consequently, optimal hyperparameters differ across datasets and contamination levels, requiring data-specific tuning and resulting in critical performance drops when the noise ratio rises. To the best of our knowledge, no existing method achieves robust image-level and pixel-level performance across diverse noise ratios.

To address this challenge, we propose Memory-Distilled Selection (\ours), a training framework designed to maintain high performance across diverse noise ratios for both image-level detection and pixel-level localization. First, we introduce a \textit{bootstrapped ensemble of memory banks} that aggregates statistics from multiple sparse, random subsamples of pretrained patch-level features extracted from the training data. As we show theoretically, the sparse nature of the memory ensemble acts as a low-pass filter under an appropriate subsampling ratio, separating normal patch features from anomalous ones in the training set. However, since the memory relies solely on features pretrained on external data, the discriminative gap between normal and anomalous features is bounded by what these frozen representations can capture. To overcome this, we \textit{distill} the memory ensemble scores into a student network, which learns to amplify the separation through gradient-based optimization on the target domain while exploiting the early-learning regularization property of neural networks. Yet distillation alone presents a trade-off: early stopping preserves image-level robustness but yields insufficient training for pixel-level precision, while extended training risks overfitting to noise. To resolve this, we perform iterative data selection, fine-tuning the student network on self-filtered clean data. This enables extended training without overfitting, allowing \ours to progress from coarse-grained noise filtering to fine-grained AD capable of precise pixel-level segmentation.

The design of progressively transitioning from low-pass filtering to fine-tuning with data selection makes \ours~robust across a wide range of contamination levels without requiring noise-ratio-specific hyperparameter tuning. On the MVTecAD benchmark, \ours~achieves 99.16\% image-level AUROC at a 40\% noise ratio, demonstrating strong robustness. Furthermore, it establishes state-of-the-art performance on the VisA and Real-IAD datasets under noisy settings. Beyond automated detection, we demonstrate the utility of \ours~for active label correction: by sorting training samples based on \ours~selection scores, we show that an annotator can efficiently identify and remove contaminated samples with minimal effort.

\textbf{Contributions.}
\begin{itemize}

\item We introduce a contamination-resilient AD training algorithm that exploits the sparsity of bootstrapped memory ensembles as a low-pass filter to isolate nominal patterns, subsequently refining pixel-level detection via distillation and iterative self-selection.

\item We conduct extensive empirical analyses across datasets and noise levels, including ablation studies that dissect each component's role, detailed studies of hyperparameters, and a demonstration of the utility of \ours~for active label correction.

\item We provide theoretical and empirical insights into how small-ratio memory subsampling yields a high-recall starting point under heavy contamination.
\end{itemize}

\section{Related Work}

\paragraph{Anomaly detection with noisy data}
Methods for AD with noisy data assume that the training set may contain unlabeled anomalies. IGD~\cite{chen2022deep} addresses label noise by fitting a single Gaussian descriptor and achieves strong image-level results on datasets such as CIFAR-10 and MNIST. However, its unimodal prior limits the fine-grained localization required in industrial AD. SoftPatch~\cite{jiang2022softpatch} extends PatchCore~\cite{roth2022towards} by incorporating a memory bank and pruning anomalous patch features using a Local Outlier Factor (LoF) score. InReaCh~\cite{mcintosh2023inter} retains only those patches that form symmetric 1-nearest-neighbor matches recurring across most training images, thereby ensuring the feature space remains tightly clustered. Because both methods are learning-free, they depend on pretrained embeddings and hand-crafted rules, making them sensitive to hyperparameter choices and limiting their noise tolerance.
FUN-AD~\cite{im2025fun} trains a discriminator using binary pseudo-labels derived by thresholding normalized nearest-neighbor distances. However, its cross-entropy objective requires both normal and anomaly samples, and min-max normalization inevitably mislabels some normal samples as anomalies when contamination is low or absent. This implicit assumption of non-zero contamination causes degraded performance on clean data—the opposite failure mode of conventional one-class methods.
SRR~\cite{yoon2021self} employs progressive sample selection but, as an image-embedding method, cannot localize anomalies at the pixel level. None of these methods provide the complete set of capabilities required for fine-grained defect detection under severe noise ratios.
In contrast, \ours~minimizes reliance on hand-crafted rules by exploiting the noise resilience of subsampled memory ensembles and the early-learning robustness of neural networks, thereby enabling fine-grained localization of defects even when training data are largely contaminated.

\paragraph{Relation to learning with noisy labels}
Our work draws on insights from the broader literature on learning with noisy labels. \citet{reed2014training} introduce a bootstrapping loss that combines the network's own soft predictions with noisy labels, demonstrating that noise averaging can protect the supervision signal. Our subsampled memory ensemble serves an analogous role: by averaging nearest-neighbor distances, it yields a noise-tolerant anomaly score.
\citet{rolnick2017deep} show that modern CNNs can generalize even when true labels are overwhelmed by random noise, while \citet{liu2020early} formalize the early-learning phenomenon and design a regularizer to exploit it. These findings motivate our use of a learnable reconstructor that leverages its simplicity bias through knowledge distillation. Finally, MentorNet~\cite{jiang2018mentornet} demonstrates that a dynamically updated curriculum—one that progressively filters suspected noisy samples—markedly improves robustness, supporting the iterative self-selection strategy in the fine-tuning stage of \ours.
The high-level idea of progressive selection using a self-curated clean subset is well established in the classification literature. However, to our knowledge, a systematic formulation of progressive selection based on memory distillation—one that is tailored to industrial AD—has not been previously explored.

\section{Preliminaries}

\subsection{Problem Setting: AD with Noisy Data}
In conventional AD, models are trained on a dataset assumed to consist exclusively of normal samples. In contrast, the \emph{noisy AD} setting relaxes this assumption: the training set $\mathcal{D}$ contains both normal and anomalous images, with their labels unknown. 
Formally, let $\mathcal{D} = \mathcal{D}_{\text{normal}} \cup \mathcal{D}_{\text{anomaly}}$ denote the training set, where $\mathcal{D}_{\text{anomaly}}$ contains unlabeled anomalous samples. The \emph{noise ratio} is defined as $|\mathcal{D}_{\text{anomaly}}| / |\mathcal{D}|$. At test time, the model must assign high anomaly scores to anomalous samples and low scores to normal samples.

\subsection{Memory-Based Anomaly Detection}
A memory-based AD model stores representative patch-level features from the training data in a memory bank $\mathcal{M}$. Given an input image $x \in \mathcal{X}$, the model computes an anomaly score map by measuring the distance from each patch feature to its nearest neighbor in $\mathcal{M}$. Formally, the \emph{memory anomaly score} $s_{\mathcal{M}}: \mathcal{X} \to \mathbb{R}^{H \times W}$ is defined as:
\begin{equation}
s_{\mathcal{M}}(x)_{hw} = \min_{z \in \mathcal{M}} D\bigl(z,\, g(x)_{hw}\bigr),
\end{equation}
where $g$ is a frozen pretrained encoder that produces a feature map $g(x) \in \mathbb{R}^{H \times W \times C}$, $g(x)_{hw} \in \mathbb{R}^C$ denotes the patch feature at spatial position $(h, w)$, and $D$ is the Euclidean distance metric in the latent space.
$\mathcal{M}$ is typically constructed via greedy coreset sampling \citep{roth2022towards} from the set of all patch features extracted from the training data:
\begin{equation}
g(\mathcal{D}) \coloneqq \bigl\{ g(x)_{hw} \mid x \in \mathcal{D},\; (h,w) \in [H] \times [W] \bigr\},
\end{equation}
where $[H] = \{1, 2, \dots, H\}$ and $[W] = \{1, 2, \dots, W\}$.

\subsection{Teacher-Student Anomaly Detection}
In the teacher-student AD framework, the anomaly score is defined as the discrepancy between a frozen pretrained teacher encoder $g$ and a trainable student network $f_\theta$. We refer to this score as the \emph{reconstruction score}, denoted $s_\theta: \mathcal{X} \to \mathbb{R}^{H \times W}$:
\begin{equation}
\label{eq:teacher_student_ad}
s_\theta(x)_{hw} = D\bigl(g(x)_{hw},\, f_{\theta}(x)_{hw}\bigr),
\end{equation}
where $g(x)$ and $f_\theta(x)$ are the feature maps from the teacher and student, respectively. The student is trained to minimize the average reconstruction score over the training set:
\begin{equation}
\min_{\theta} \;
\frac{1}{|\mathcal{D}|}
\sum_{x \in \mathcal{D}}  s_\theta(x) .
\end{equation}
In Reverse Distillation~\cite{deng2022anomaly}, the student is implemented as a bottleneck decoder that receives the teacher's intermediate activations as input. Restricting the student's representational capacity ensures that the reconstruction score remains low only for in-distribution inputs, while out-of-distribution (anomalous) inputs incur high scores.

\begin{figure*}[t]
\centering
\includegraphics[width=1\linewidth]{./figures/method.pdf}
\caption{
Overview of the three-stage MeDS pipeline. \textbf{First stage: memory construction.} From each training image $x \in \mathcal{D}$, a frozen teacher network $g$ extracts patch features $g(x)_{hw} \in \mathbb{R}^{C}$ for all spatial positions $(h, w)$. The pooled feature collection $g(\mathcal{D}) \subseteq \mathbb{R}^{C}$ of size $\lvert g(\mathcal{D}) \rvert = \lvert \mathcal{D} \rvert \cdot H \cdot W$ is randomly subsampled $B$ times at ratio $\rho$ to form partial memory banks $\lbrace \mathcal{M}_b \rbrace_{b=1}^{B}$. Each memory bank $\mathcal{M}_b$ produces a per-patch nearest-neighbor distance score $s_{\mathcal{M}_b}(x) \in \mathbb{R}^{H \times W}$, and averaging these scores across all $B$ memory banks yields the ensemble score $s_{\mathbb{M}}(x) \in \mathbb{R}^{H \times W}$. The sparse subsampling acts as a low-pass filter that downweights anomalous patches. \textbf{Second stage: memory score distillation.} For further enhancement, the memory score $s_{\mathbb{M}}$ is distilled into a network-based reconstruction score $s_\theta$ via $\min_\theta \lVert s_{\mathbb{M}}(x) - s_\theta(x) \rVert_2$, where $s_\theta(x) \in \mathbb{R}^{H \times W}$ is the distance between teacher features $g(x)$ and the features $f_\theta(x) \in \mathbb{R}^{H \times W \times C}$ extracted from a student network $f_\theta$. \textbf{Third stage: fine-tuning with progressive selection.} Finally, the reconstruction score $s_\theta$ is fine-tuned on a clean subset $\mathcal{S}_t \subseteq \mathcal{D}$ obtained by a progressive selection criterion.
}
\label{fig:method}
\end{figure*}

\section{Proposed Method: \ours}
\ours~consists of three training phases which are executed in order: (1) bootstrapped memory construction via an ensemble of subsampled partial memories, whose sparsity acts as a low-pass filter and thus removes noisy samples at a coarse level; (2) memory score distillation to the reconstruction score, which refines the anomaly scores and provides a
well-initialized model; and (3) fine-tuning on self-selected training data, enabling the network to learn fine-grained reconstruction of normal patterns without overfitting to anomalous samples.

\subsection{Memory Construction}

\paragraph{Bootstrapped memory}
As the initial component of \ours,
we construct a memory-based model using an \textit{ensemble of $B$ subsampled partial memories}, where the final anomaly score is the average of the individual scores from each subsampled memory:
\begin{equation}
\label{eq:bootstrapped_memory}
s_{\mathbb{M}}(x) = \frac{1}{B} \sum_{b=1}^B s_{\mathcal{M}_b}(x).
\end{equation}
Each partial memory $\mathcal{M}_b$ is designed to be a sparse representative of the original full memory bank, and hence is constructed by extracting a random subset of features from the full memory bank of the training set $g(\mathcal{D})$. Formally, with subsampling ratio $\rho \in (0, 1)$, we have $|\mathcal{M}_b| = \rho |g(\mathcal{D})|$.
Unless specified otherwise, the sampling ratio is set to $\rho = 0.1$.

Theorem 1 below characterizes why sparse subsampling induces separation between normal and anomalous features:
\begin{thm}
\label{thm:main_theorem}
Under a regularity condition, for any anomaly and normal patch features $q_{anom}$ and $q_{norm}$, the expected gap 
\begin{equation}
\label{eq:memory_gap}
\Delta(m) := \mathbb{E}[D(q_{anom}, \mathcal{M})] - \mathbb{E}[D(q_{norm}, \mathcal{M})]
\end{equation}
satisfies $ \Delta(m) > 0 $, and it is decomposed into a second-order Taylor approximation $\Delta_0$ with the remainder $\epsilon_0(m)$:
\begin{equation}
\Delta(m) = \Delta_0(m) + \epsilon_0(m),
\end{equation}
where $\Delta_0(m)$ is an integral with a weight function $\omega(m, r)$
\begin{equation}
\Delta_0(m) = \int_{0}^{\infty} \delta(r) \cdot \omega(m, r) \, \mathrm{d}r
\end{equation}
with a non-negative function $\delta$. Notably, $\omega(m, r)$ is unimodal with respect to $m$. Here, the expectation $\mathbb{E}$ is over the memory $\mathcal{M}$ that is randomly subsampled from the extracted feature set $g(\mathcal{D})$ with the constraint $|\mathcal{M}| = m$. 
\end{thm}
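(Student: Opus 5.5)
We write $N=|g(\mathcal{D})|$ and view $\mathcal{M}$ as a uniformly random $m$-subset of $g(\mathcal{D})$. For a query $q$, let $F_q(r)$ be the fraction of the $N$ features within distance $r$ of $q$. Then $D(q,\mathcal{M})>r$ exactly when none of the $m$ draws falls in the ball of radius $r$. For a nonnegative variable we have $\mathbb{E}[X]=\int_0^\infty \Pr(X>r)\,\mathrm{d}r$, so
\begin{equation*}
\mathbb{E}[D(q,\mathcal{M})]=\int_0^\infty \Pr\bigl(D(q,\mathcal{M})>r\bigr)\,\mathrm{d}r .
\end{equation*}
Under sampling without replacement the integrand is a hypergeometric ratio. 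We approximate it by the with-replacement expression $(1-F_q(r))^m$, and this approximation error is the first piece of the remainder $\epsilon_0(m)$. The regularity condition we impose is a density-ordering assumption: anomalous features live in sparse regions, so $F_{anom}(r)\le F_{norm}(r)$ for all $r$, with strict inequality on a set of positive measure. Monotonicity of $p\mapsto(1-p)^m$ then gives, pointwise in $r$,
\begin{equation*}
\Pr\bigl(D(q_{anom},\mathcal{M})>r\bigr)\ge \Pr\bigl(D(q_{norm},\mathcal{M})>r\bigr),
\end{equation*}
and this inequality also holds for the exact hypergeometric probabilities. Integrating yields $\Delta(m)>0$ without any approximation.

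For the decomposition, write $G(p)=(1-p)^m=\exp(m\log(1-p))$. Set $F_a=F_{anom}(r)$, $F_n=F_{norm}(r)$ and $\delta(r)=F_n(r)-F_a(r)\ge 0$. Expand $G(F_a)-G(F_n)$ to second order around $F_n$:
\begin{equation*}
G(F_a)-G(F_n)=-G'(F_n)\,\delta+\tfrac12 G''(F_n)\,\delta^2+O(\delta^3).
\end{equation*}
The leading term is $m(1-F_n)^{m-1}\delta(r)$, which defines the weight $\omega(m,r)=m(1-F_{norm}(r))^{m-1}$, suitably corrected by the second-order piece. To keep $\delta$ non-negative and the second-order term absorbed into the weight, we take $\omega(m,r)=\tfrac{1}{\delta}\bigl[-G'\delta+\tfrac12 G''\delta^2\bigr]$, or equivalently we fold $\tfrac12 G''\delta$ into $\omega$. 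The cubic Taylor remainder, together with the hypergeometric-versus-binomial error, makes up $\epsilon_0(m)$. We would bound both pieces by $O(\sup_r\delta(r)^3 m^3)$ and $O(m^2/N)$ respectively. This bound is where the regularity condition is used again, namely bounded density near $q$ and $\delta$ small.

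For unimodality, fix $r$ and set $u=1-F_{norm}(r)\in(0,1)$. Then $h(m)=m\,u^{m-1}$ satisfies $\partial_m\log h=1/m+\log u$. This derivative is strictly decreasing in $m$ and has a single zero at $m^*=-1/\log u$, so $h$ increases and then decreases: it is unimodal. The interpretation is that a sparse memory, with $m$ near $m^*$, maximizes the weight on the radii where normal density is high. This is the low-pass filtering effect.

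The main obstacle is the unimodality claim once the second-order correction is included in $\omega$. We would handle it by showing that the log-derivative of $m u^{m-1}\bigl(1+\tfrac{(m-1)\delta}{2u}\bigr)$ is still strictly decreasing, that is, that the function is log-concave in $m$. If that fails in general, we fall back to defining $\omega$ from the first-order term only and moving the second-order term into $\epsilon_0$.
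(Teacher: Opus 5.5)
Your core argument is the paper's. You write $\mathbb{E}[D(q,\mathcal{M})]$ as the integral of the survival function, get $\Delta(m)>0$ from monotonicity of $p\mapsto(1-p)^m$, expand $(1-p)^m$ about $\pi_{norm}(r)$, and read off unimodality from $\partial_m\log(m u^{m-1})=1/m+\log u$. Your fallback is exactly the paper's decomposition: $\omega(m,r)=m(1-\pi_{norm}(r))^{m-1}$. There, ``second-order'' refers to the Lagrange remainder $R(r;m)=\tfrac{m(m-1)}{2}(1-\xi)^{m-2}\delta(r)^2\ge 0$, so $\epsilon_0(m)\ge 0$ and $R=o(\delta)$.

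The real problem is where you deviate from the paper, namely sampling without replacement.

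(i) Your claim that $\Delta(m)>0$ holds ``without any approximation'' does not follow from your assumption. The exact survival function $\binom{N-C}{m}/\binom{N}{m}$, with $C=NF_q(r)$, is only non-increasing in $C$, and it vanishes once $C>N-m$. So strict positivity needs the set where $\delta>0$ to meet $\{r: N(1-F_{anom}(r))\ge m\}$ in positive measure. For example, at $m=N$ you have $\mathcal{M}=g(\mathcal{D})$, and $\Delta(N)$ is the difference of the two nearest-neighbour distances. That difference is $0$ when they coincide, even if $F_{anom}<F_{norm}$ at larger radii.

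(ii) The hypergeometric-versus-binomial error is not a higher-order term. Take $C_{anom}=0$, $C_{norm}=1$ and $m=\rho N$. The exact gap density is $\rho$, while the with-replacement one is $1-(1-1/N)^{\rho N}\approx 1-e^{-\rho}$. The discrepancy is about $\rho^2/2$ for all large $N$, which is the same size as the quadratic term you fold into $\omega$. Your $O(m^2/N)=O(\rho^2N)$ bound is vacuous at the paper's $\rho=0.1$.

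The paper avoids both issues by \emph{defining} the expectation under $m$ i.i.d.\ draws with replacement. Then $(1-\pi(q,r))^m$ is exact, strict monotonicity of $t\mapsto t^m$ gives $\Delta(m)>0$ for every $m\ge 1$, and $\epsilon_0$ is purely the Taylor remainder. You should do the same.

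On folding the quadratic term into $\omega$: your ``main obstacle'' is not one. For $m\ge 1$, the factors $m$, $u^{m-1}$ and the positive affine factor $1+(m-1)\delta/(2u)$ are all log-concave in $m$, so their product is log-concave and hence unimodal. The cost is different. The folded weight depends on $\pi_{anom}$, and its mode moves away from $-1/\log(1-\pi_{norm}(r))$, which removes the low-pass interpretation the theorem is meant to support. Taken to the limit, $\omega=[G(F_a)-G(F_n)]/\delta$ gives $\epsilon_0=0$ and is still unimodal in $m$. So unimodality alone carries little content; what matters is that the leading weight depends only on the normal density. Keep the first-order weight, as in your fallback.
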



We give the proof in Appendix~\ref{proof:thm_proof}.
Theorem~\ref{thm:main_theorem} characterizes the expected gap between the distance of a normal feature to the memory and that of an anomalous feature to the same memory.
The positivity of $\Delta(m)$ implies that, on average, anomalous features lie farther from the memory than normal features.
The key insight lies in the unimodality (\ie, bell-shaped curve) of the weight function $\omega(m, r)$ with respect to $m$. Since $\Delta_0(m)$ approximates $\Delta(m)$ via Taylor expansion, and $\Delta_0(m)$ is a weighted integral with the unimodal weight $\omega(m, r)$, the gap $\Delta(m)$ tends to be larger at intermediate values of $m$. In other words, subsampling with an appropriately small memory size acts as a low-pass filter, amplifying the separation between normal and anomalous features. Ensembling in Eq.~\eqref{eq:bootstrapped_memory} reduces variance in estimating this expected gap, yielding more stable anomaly scores. This theoretical insight is empirically validated in Fig.~\ref{fig:train_patch}, where the separation between normal and anomalous patches is maximized at a moderately small subsampling ratio.

\subsection{Memory Score Distillation}
While the bootstrapped memory ensemble provides robust coarse-level filtering, it exhibits an inherent limitation. The memory scores rely solely on features pretrained on external data, and hence, the discriminative gap between normal and anomalous features is bounded by what these frozen representations can capture; when the pretrained features do not sufficiently distinguish domain-specific anomaly patterns, the resulting separation remains fundamentally limited.

To address these limitations, we train a student network via score-space distillation. Unlike standard teacher-student distillation in AD that operates in feature space, \textit{we distill knowledge in the score space}: the reconstruction score network is trained to minimize
\begin{equation}
\label{eq:memory_distillation_loss}
\underset{\theta}{\min}
\frac{1}{|\mathcal{D}|}
\sum_{i=1}^{|\mathcal{D}|}
\lVert
s_{\mathbb{M}}(x_i) - s_{\theta}(x_i)
\rVert_2,
\end{equation}
where $\lVert \cdot \rVert_2$ is the $L_2$ norm, and $s_\theta \coloneqq D(g, f_\theta)$ is defined in Eq.~\eqref{eq:teacher_student_ad}. In practice, the average is computed over the batch.

Since the reconstructor $s_\theta$ is trained from scratch (\ie, initialized with random weights), it benefits from the early learning phenomenon of neural networks~\citep{liu2020early}. Neural networks exhibit a simplicity bias, fitting dominant patterns before memorizing rare ones. In our setting, normal patches constitute the majority of the training data, so the student learns to reconstruct normal patterns earlier and more accurately than anomalous ones. This property allows the student to effectively denoise the memory scores: the reconstruction score $s_\theta(x)$ is lowered on normal patches earlier and more substantially than on anomalous patches, thereby improving separation. Moreover, the distillation process yields a well-initialized network that enables a smooth transition to the subsequent fine-tuning phase.

\subsection{Fine-tuning with Progressive Selection}

The distilled model benefits from early learning regularization, which improves robustness to noise during early training. However, this involves a fundamental trade-off: early stopping preserves image-level robustness but yields insufficient training for precise pixel-level localization; late stopping enables fine-grained learning but degrades performance due to noise memorization. To resolve this trade-off, we fine-tune the student network on self-selected clean data, allowing extended training without overfitting to anomalous samples.

We fine-tune the reconstruction score network $s_\theta$ on a progressively selected subset $\mathcal{S}_t$:
\begin{equation}
\min_{\theta} 
\frac{1}{|\mathcal{S}_t|}
\sum_{x \in \mathcal{S}_t} s_\theta (x) ,
\end{equation}
where the selected data $\mathcal{S}_t$ at iteration $t$ is updated at the beginning of every epoch based on the selection criterion $\eta_t(x)$ and the threshold $\tau_t$.
We define the selection criterion as a combination of the current reconstruction score and the initial distilled score:
\begin{equation}
\eta_t(x) = (1- \alpha_t) 
\underset{h,w}{\max} \: s_{\theta_0}(x)_{hw}
+ \alpha_t \underset{h,w}{\max} \: s_{\theta}(x)_{hw},
\end{equation}
where $\max_{h,w} s_{\theta}(x)_{hw}$ denotes the maximum anomaly score at the patch level. 
Here, $\theta_0$ is the frozen parameters of the initial distilled model, fixed during fine-tuning. The selected data is given as $\mathcal{S}_t = \{ x : \eta_t (x) < \tau_t \}$ based on the threshold
\begin{equation}
\tau_t = \mathrm{Median}(\eta_t(x)) + k_t \mathrm{MAD}(\eta_t(x)),
\end{equation}
where median and median absolute deviation (MAD) are computed over the whole noisy training data $\mathcal{D}$. The interpolation coefficient $\alpha_t {=} \min(1, 2 t/T)$ and critical value $k_t {=} k(t/T)$ monotonically increase from 0 to 1 and from 0 to $k$, where $T$ is the total number of iterations. Unless specified otherwise, we set $k = 1$.

The median-based threshold assumes that normal samples constitute the majority of the training set, ensuring that at least half of the selected samples are normal. The MAD provides a robust measure of score dispersion that is less sensitive to outliers than the standard deviation. Initializing $k_t {=} 0$ yields a conservative threshold that selects only samples with scores below the median, and progressively increasing $k_t$ relaxes the threshold to safely incorporate extra normal samples as the model improves. This forms a positive feedback loop: an improved model gives more accurate selection scores, yielding a higher-quality training subset, which in turn further refines the model, enabling precise pixel-level localization without overfitting to anomalous samples.

\subsection{Full Training Algorithm and Inference}
\ours~trains the parametric reconstruction model $s_\theta$ through the three phases described above: constructing the bootstrapped memory ensemble $s_{\mathbb{M}}$, distilling its scores to initialize $s_\theta$, and fine-tuning on progressively selected data. The complete algorithm flow is described in Algorithm~\ref{alg:meds}. At inference, the anomaly score map for a test sample $x$ is computed as $s_\theta(x)$ and resized to match the original image dimensions.

\begin{table*}[t]
\centering
\caption{
Results on MVTecAD where \ul{underline highlights the best noisy AD baseline performance} and \textbf{bold emphasizes the better performance between baseline and \ours.}
}
\label{table:sota_mvtec}
\resizebox{\linewidth}{!}{
\begin{tabular}{rcccc|cccc|cccc|cccc}
\toprule
~& \multicolumn{16}{c}{MVTecAD} \\ 
\cmidrule{2-17}
Metric & \multicolumn{4}{c}{\textbf{I-AUROC ($\uparrow$)}}  & \multicolumn{4}{c}{\textbf{I-AP ($\uparrow$)}} & \multicolumn{4}{c}{\textbf{P-AUPRO ($\uparrow$)}} & \multicolumn{4}{c}{\textbf{P-AP ($\uparrow$)}} \\
\cmidrule{2-17}
Noise Ratio & 0 & 10 & 20 & 40 & 0 & 10 & 20 & 40 & 0 & 10 & 20 & 40 & 0 & 10 & 20 & 40 \\ 
\midrule
SoftPatch  & 
\ul{98.80} & \ul{98.10} & \ul{96.89} & 93.58 & 
\ul{99.60} & \ul{99.30} & \ul{98.87} & 97.66 & 
\ul{92.80} & \ul{86.40} & 77.75 & 69.33 & 
\ul{66.30} & 57.70 & 49.38 & 35.25 \\
InReach  & 
92.00 & 87.41 & 78.80 & 73.57 & 
97.06 & 95.14 & 91.78 & 89.34 & 
86.14 & 81.81 & 75.28 & 72.06 & 
52.87 & 49.65 & 45.04 & 39.67 \\
FUN-AD  & 
81.89 & 95.49 & 96.06 & \ul{97.70} & 
89.18 & 97.73 & 98.04 & \ul{98.90} & 
61.49 & 78.39 & \ul{78.21} & \ul{73.91} & 
42.94 & \ul{58.52} & \ul{58.86} & \ul{61.38} \\
\midrule
HVQ & 
\textbf{96.71} & 91.08 & 91.49 & 92.14 & 
\textbf{98.83} & 96.53 & 96.69 & 96.69 & 
\textbf{91.31} & 87.76 & 88.41 & 88.69 & 
\textbf{47.71} & 42.47 & 42.67 & 41.88 \\
\rowcolor{blue!10}
HVQ + \ourso & 
95.89 & \textbf{94.95} & \textbf{94.76} & \textbf{94.26} & 
98.53 & \textbf{98.09} & \textbf{98.01} & \textbf{97.80} & 
91.18 & \textbf{90.40} & \textbf{90.19} & \textbf{90.13} & 
47.42 & \textbf{46.13} & \textbf{44.65} & \textbf{44.82} \\
\midrule
Dinomaly & 
\textbf{99.64} & 95.19 & 92.16 & 87.38 & 
\textbf{99.80} & 97.34 & 95.50 & 93.28 & 
94.62 & 91.04 & 90.21 & 89.26 & 
68.19 & 58.22 & 54.60 & 53.00 \\ 
\rowcolor{blue!10}
Dinomaly + \ourso & 
99.37 & \textbf{99.49} & \textbf{99.31} & \textbf{99.16} & 
99.72 & \textbf{99.76} & \textbf{99.71} & \textbf{99.63} & 
\textbf{94.74} & \textbf{94.69} & \textbf{94.59} & \textbf{94.54} &
\textbf{68.35} & \textbf{68.96} & \textbf{68.63} & \textbf{68.05} \\
\midrule
INP-Former & 
\textbf{99.66} & 95.13 & 91.21 & 85.85 & 
\textbf{99.88} & 97.34 & 94.31 & 91.14 & 
94.88 & 91.06 & 89.64 & 88.85 & 
\textbf{70.55} & 59.88 & 54.39 & 51.26 \\
\rowcolor{blue!10}
INP-Former + \ourso & 
99.45 & \textbf{99.39} & \textbf{99.41} & \textbf{99.17} &
99.78 & \textbf{99.79} & \textbf{99.78} & \textbf{99.68} & 
\textbf{95.13} & \textbf{95.25} & \textbf{95.22} & \textbf{95.21} & 
67.15 & \textbf{67.79} & \textbf{67.51} & \textbf{67.39} \\
\bottomrule
\end{tabular}}
\end{table*}

\begin{table*}[t]
\centering
\caption{
Results on VisA where \ul{underline highlights the best noisy AD baseline performance} and \textbf{bold emphasizes the better performance between baseline and \ours.}
}
\label{table:sota_visa}
\resizebox{\linewidth}{!}{
\begin{tabular}{rcccc|cccc|cccc|cccc}
\toprule
~& \multicolumn{16}{c}{VisA} \\ 
\cmidrule{2-17}
Metric & \multicolumn{4}{c}{\textbf{I-AUROC ($\uparrow$)}}  & \multicolumn{4}{c}{\textbf{I-AP ($\uparrow$)}} & \multicolumn{4}{c}{\textbf{P-AUPRO ($\uparrow$)}} & \multicolumn{4}{c}{\textbf{P-AP ($\uparrow$)}} \\
\cmidrule{2-17}
Noise Ratio & 0 & 2 & 5 & 10 & 0 & 2 & 5 & 10 & 0 & 2 & 5 & 10 & 0 & 2 & 5 & 10 \\ 
\midrule
SoftPatch & 
\ul{93.85} & \ul{93.37} & \ul{92.47} & 89.91 & 
\ul{94.88} & \ul{94.74} & \ul{94.12} & 92.86 & 
\ul{88.39} & \ul{86.77} & \ul{82.88} & \ul{75.09} & 
\ul{47.50} & \ul{45.56} & 44.73 & 37.03 \\
InReach & 
83.99 & 79.34 & 73.40 & 64.15 & 
86.82 & 84.09 & 80.64 & 74.06 &
78.80 & 73.78 & 65.45 & 51.30 & 
31.37 & 29.54 & 27.76 & 24.15 \\ 
FUN-AD & 
82.57 & 90.69 & 92.27 & \ul{94.79} & 
82.71 & 90.31 & 92.40 & \ul{95.50} & 
51.22 & 60.93 & 64.59 & 66.48 & 
29.36 & 37.04 & \ul{45.41} & \ul{46.55} \\
\midrule
HVQ & 
\textbf{88.88} & 87.92 & 87.12 & 86.10 & 
\textbf{91.02} & 90.02 & 89.34 & 88.33 & 
\textbf{84.33} & \textbf{83.83} & \textbf{84.51} & \textbf{84.00} & 
\textbf{34.17} & \textbf{31.81} & \textbf{31.58} & \textbf{33.24} \\
\rowcolor{blue!10}
HVQ + \ourso & 
88.36 & \textbf{88.26} & \textbf{87.32} & \textbf{87.19} & 
90.25 & \textbf{90.31} & \textbf{89.63} & \textbf{89.74} & 
83.19 & 83.29 & 83.27 & 83.40 & 
30.26 & 31.00 & 31.38 & 31.99 \\
\midrule
Dinomaly & 
\textbf{98.47} & \textbf{97.35} & 96.06 & 93.56 & 
\textbf{98.63} & \textbf{97.67} & 96.65 & 94.06 & 
94.38 & 93.93 & 93.63 & 92.59 & 
\textbf{52.80} & 49.81 & 46.94 & 46.70 \\
\rowcolor{blue!10}
Dinomaly + \ourso & 
97.53 & 97.27 & \textbf{97.51} & \textbf{97.43} & 
96.99 & 96.87 & \textbf{97.12} & \textbf{97.01} & 
\textbf{94.39} & \textbf{94.06} & \textbf{94.10} & \textbf{94.39} & 
51.38 & \textbf{50.93} & \textbf{51.27} & \textbf{51.46} \\
\midrule
INP-Former & 
\textbf{98.15} & 96.78 & 95.30 & 94.45 & 
\textbf{98.37} & \textbf{96.96} & 95.58 & 94.52 & 
\textbf{94.70} & 93.96 & 93.24 & 93.20 & 
47.60 & 42.58 & 39.89 & 42.21 \\
\rowcolor{blue!10}
INP-Former + \ourso & 
98.02 & \textbf{97.03} & \textbf{97.04} & \textbf{96.54} & 
96.66 & 96.78 & \textbf{96.53} & \textbf{96.30} & 
94.19 & \textbf{94.26} & \textbf{94.24} & \textbf{93.96} & 
\textbf{49.58} & \textbf{43.12} & \textbf{43.51} & \textbf{42.70} \\
\bottomrule
\end{tabular}
}
\end{table*}

\begin{table*}[t]
\centering
\caption{
Results on Real-IAD where \textbf{bold highlights best performance}
}
\label{table:sota_realiad}
\resizebox{\linewidth}{!}{
\begin{tabular}{rcccc|cccc|cccc|cccc}
\toprule
~& \multicolumn{16}{c}{Real-IAD} \\ 
\cmidrule{2-17}
Metric & \multicolumn{4}{c}{\textbf{I-AUROC ($\uparrow$)}}  & \multicolumn{4}{c}{\textbf{I-AP ($\uparrow$)}} & \multicolumn{4}{c}{\textbf{P-AUPRO ($\uparrow$)}} & \multicolumn{4}{c}{\textbf{P-AP ($\uparrow$)}} \\
\cmidrule{2-17}
Noise Ratio & 0 & 10 & 20 & 40 & 0 & 10 & 20 & 40 & 0 & 10 & 20 & 40 & 0 & 10 & 20 & 40 \\ 
\midrule
PatchCore & 
91.30 & 90.40 & 89.50 & 88.10 & 
- & - & - & - & 
92.60 & 93.20 & 93.00 & 92.40 & 
- & - & - & - \\ 
RD & 
88.10 & 88.10 & 87.30 & 84.50 & 
- & - & - & - & 
95.10 & 95.10 & 94.90 & 94.70 & 
- & - & - & - \\ 
UniAD & 
85.40 & 84.20 & 82.80 & 80.10 & 
- & - & - & - & 
87.60 & 87.70 & 87.30 & 86.60 & 
- & - & - & - \\ 
SoftPatch & 
91.68 & 91.03 & \ul{90.33} & \ul{88.43} & 
\textbf{86.21} & \textbf{85.92} & \ul{84.61} & \ul{82.52} & 
91.30 & 91.72 & 91.68 & 90.96 & 
35.78 & 33.39 & 31.33 & 28.16 \\ 
\midrule
Dinomaly & 
\textbf{92.39} & \ul{91.38} & 89.97 & 87.78 & 
83.14 & 84.69 & 82.22 & 78.67 & 
\textbf{95.97} & \textbf{96.16} & \textbf{95.88} & \ul{95.47} & 
\ul{40.04} & \textbf{40.58} & \ul{38.67} & \ul{35.41} \\ 
\rowcolor{blue!10}
Dinomaly + \ourso & 
\ul{92.34} & \textbf{92.36} & \textbf{92.08} & \textbf{90.99} & 
\ul{84.76} & \ul{84.89} & \textbf{85.11} & \textbf{83.98} & 
\ul{95.53} & \ul{95.60} & \ul{95.54} & \textbf{95.67} & 
\textbf{40.41} & \ul{40.43} & \textbf{41.15} & \textbf{41.22} \\ 
\bottomrule
\end{tabular}
}
\end{table*}

\begin{table*}[t]
\centering
\caption{
Ablation on MVTecAD dataset and Dinomaly model where \textbf{bold highlights best performance}. `Memory' indicates usage of boostrapped memory ensemble, `init' shows how the final model is initialized, and `criteria' denotes the model $s_{\theta_0}$ used for data selection.
}
\resizebox{\linewidth}{!}{
\begin{tabular}{ccccc|c|cccc|cccc|cccc|cccc}
\toprule
\multicolumn{6}{c|}{}&\multicolumn{4}{c|}{\textbf{I-AUROC ($\uparrow$)}}&\multicolumn{4}{c|}{\textbf{I-AP ($\uparrow$)}}&\multicolumn{4}{c|}{\textbf{P-AUPRO ($\uparrow$)}}&\multicolumn{4}{c}{\textbf{P-AP ($\uparrow$)}}\\
\cmidrule{7-22}
\multicolumn{6}{c|}{Ablation Setting}&\multicolumn{16}{c}{Noise Ratio}\\
\midrule
Memory&Distill&Fine-tune&Init&Criteria& Description &0&10&20&40&0&10&20&40&0&10&20&40&0&10&20&40\\
\midrule
\ccross &\ccross &\ccross &\ccross &\ccross & Baseline &
\textbf{99.64}&96.38&94.16&90.45&
\textbf{99.80}&97.92&96.46&94.84&
94.62&91.61&90.81&90.04&
68.19&59.50&56.19&54.75\\
\ccheck &\ccross &\ccross &\ccross &\ccross & Memory ensemble &
98.57&97.47&94.74&92.22&
99.41&98.75&97.02&96.00&
91.99&91.81&91.86&90.50&
62.13&62.22&56.05&53.04\\
\ccheck &\ccheck &\ccross &\ccross &\ccross & Distilled model &
99.08&99.26&99.09&98.76&
99.45&99.69&99.56&99.42&
94.39&92.78&93.21&91.02&
66.15&62.98&63.54&56.88\\
\ccheck &\ccross &\ccheck &Random&Memory& Fine-tuned with memory &
98.93&99.17&99.10&98.94&
99.50&99.63&99.60&99.56&
\textbf{94.80}&\textbf{94.75}&94.56&\textbf{94.58}&
67.84&68.48&67.80&67.71\\
\ccheck &\ccheck &\ccheck &Random&Distilled& Random init fine-tune &
98.70&99.16&99.23&98.87&
99.30&99.62&99.63&99.51&
94.59&94.67&\textbf{94.71}&94.54&
67.06&67.92&68.00&67.91\\
\rowcolor{blue!10}
\ccheck &\ccheck &\ccheck &Distilled&Distilled &\ourso &
99.37&\textbf{99.49}&\textbf{99.31}&\textbf{99.16}&
99.72&\textbf{99.76}&\textbf{99.71}&\textbf{99.63}&
94.74&94.69&94.59&94.54&
\textbf{68.35}&\textbf{68.96}&\textbf{68.63}&\textbf{68.05}\\
\bottomrule
\end{tabular}
}
\label{table:abl_dinomaly}
\end{table*}

\section{Experiments}
\subsection{Setup}
\textbf{Datasets.}
We conduct experiments on 3 widely used industrial AD datasets: MVTecAD \cite{bergmann2019mvtec}, VisA \cite{zou2022spot}, and Real-IAD \cite{wang2024real}. The noise ratios for MVTecAD are $10\%, 20\%, 40\%$, and $2\%, 5\%, 10\%$ for VisA. Setting the random seed to 0 in all cases for fairness, we uniformly sample anomalous images from the test set, stratified by class and defect type, and include them in the training set until the desired noise ratio is reached. The included anomaly samples also appear in the test set. The noise ratio of VisA is relatively lower than that of MVTecAD because the number of normal samples in the training set is significantly larger compared to the total number of anomalous samples. For Real-IAD, we follow its noisy configuration benchmarking protocol.

\textbf{Hyperparameter configuration.}
For the bootstrapped memory, we subsample $B {=} 100$ memories with subsampling ratio $\rho {=} 0.1$. 
We give other configuration details in Appendix~\ref{sec:full_train_spec}.

\textbf{Evaluation Metrics.}
We evaluate performance at both image and pixel levels. For image-level detection, we report the Area Under the Receiver Operating Characteristic curve (I-AUROC) and Average Precision (I-AP), which measure the model's ability to distinguish anomalous images from normal ones. For pixel-level localization, we report pixel-wise Average Precision (P-AP) and the Area Under the Per-Region Overlap curve (P-AUPRO)~\citep{bergmann2019mvtec}, which evaluate segmentation quality while weighting anomaly regions equally regardless of their size. All metrics are reported as percentages.

\subsection{Result Comparison}

\textbf{MVTecAD and VisA.} We evaluate \ours\ on MVTecAD and VisA under the multi-class setting, where a single model is trained across all product categories. As shown in Tab.~\ref{table:sota_mvtec} and Tab.~\ref{table:sota_visa}, existing noise-robust methods exhibit inherent trade-offs across contamination levels. SoftPatch assumes that anomalous patches form sparse clusters while normal features remain dense; this assumption holds at low noise ratios but breaks down as contamination increases. Conversely, FUN-AD implicitly assumes a non-negligible presence of anomalies, causing it to discard informative normal samples on clean data. In contrast, \ours\ maintains stable performance across all noise ratios without such assumptions, consistently outperforming baselines in both image-level and pixel-level metrics. When applied as a framework to baseline methods (HVQ~\cite{lu2023hierarchical}, Dinomaly~\cite{guo2025dinomaly}, INP-Former~\cite{luo2025exploring}), \ours\ incurs only marginal degradation on clean data while substantially improving robustness under contamination.

\textbf{Real-IAD.} We evaluate on Real-IAD following the official noisy-data protocol under the single-class setting, where a separate model is trained for each product category. This configuration is necessary because certain baselines, such as SoftPatch, cannot scale to the large number of categories in Real-IAD under a multi-class scenario. As shown in Tab.~\ref{table:sota_realiad}, SoftPatch achieves competitive image-level performance but underperforms in pixel-level metrics, highlighting its limitation in fine-grained localization. \ours~achieves the best performance across all noise scenarios in both the image and pixel levels.

\subsection{Ablation}
We conduct an ablation study to evaluate the contribution of each component with Dinomaly as our baseline (row 1); results are reported in Tab.~\ref{table:abl_dinomaly}. The bootstrapped memory ensemble (row 2) improves image-level performance under noisy conditions but shows limited pixel-level localization capability. Memory distillation (row 3) addresses this limitation by transferring the coarse-level robustness of the memory ensemble while refining fine-grained localization through gradient-based learning, yielding consistent gains across all metrics. Fine-tuning with progressive selection further improves both image-level and pixel-level performance, with particularly pronounced gains in P-AP, demonstrating its effectiveness in enabling precise segmentation without overfitting to anomalous samples. 
We additionally analyze the fine-tuning phase by isolating the effects of initialization and the selection criterion (rows 4-6): using the distilled model for both initialization and selection outperforms alternatives using the memory ensemble without distillation or random initialization, confirming that both components are essential.

\begin{figure}[t]
\centering
\includegraphics[width=.98\linewidth]{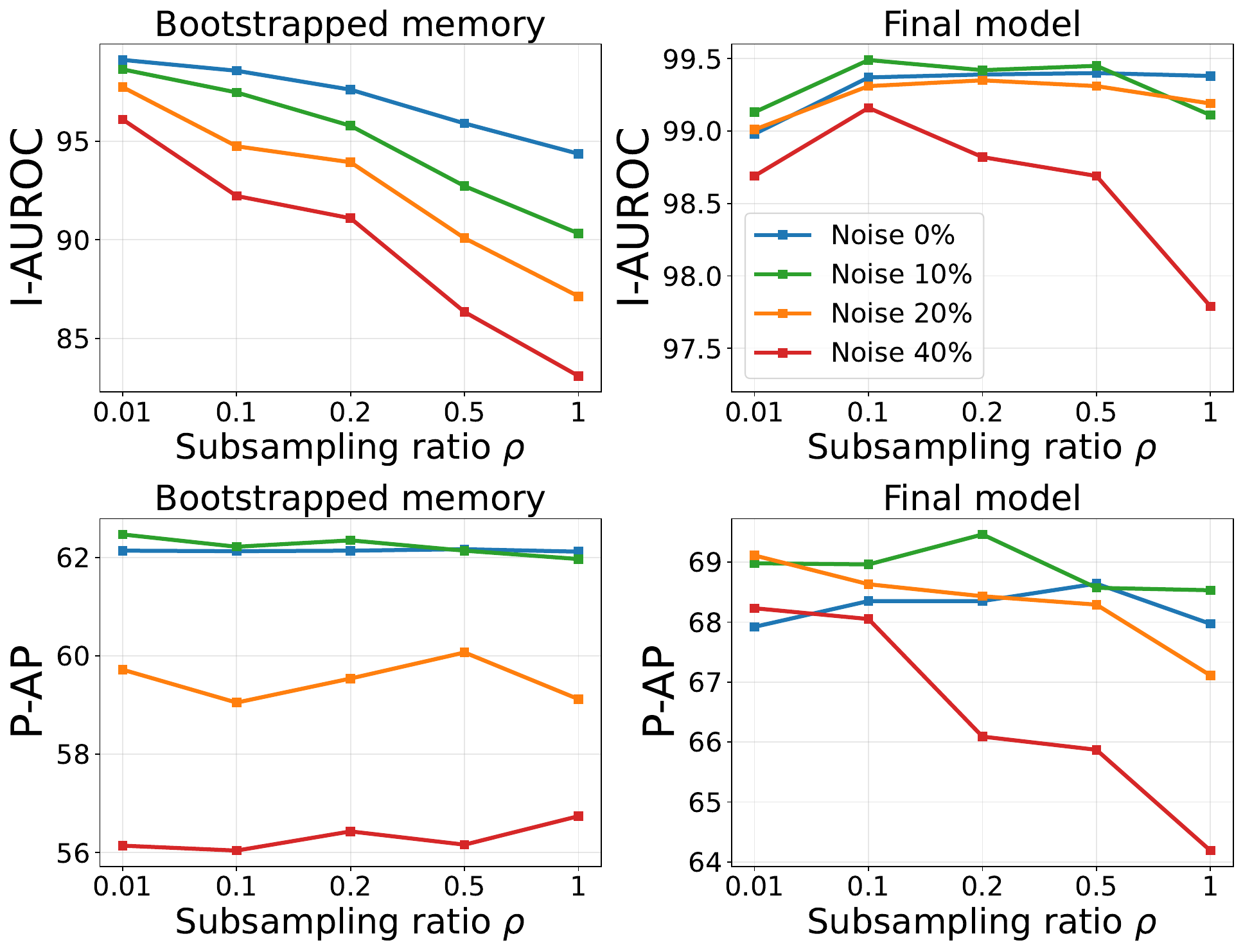}
\caption{
Effect of varying the subsampling ratio on the bootstrapped memory ensemble (left) and the resulting effect on the final model (right).
}
\label{fig:sampling_ratio}
\end{figure}

\begin{figure}[t] 
\centering
\begin{subfigure}[b]{0.325\linewidth} 
\centering
\includegraphics[width=\linewidth]{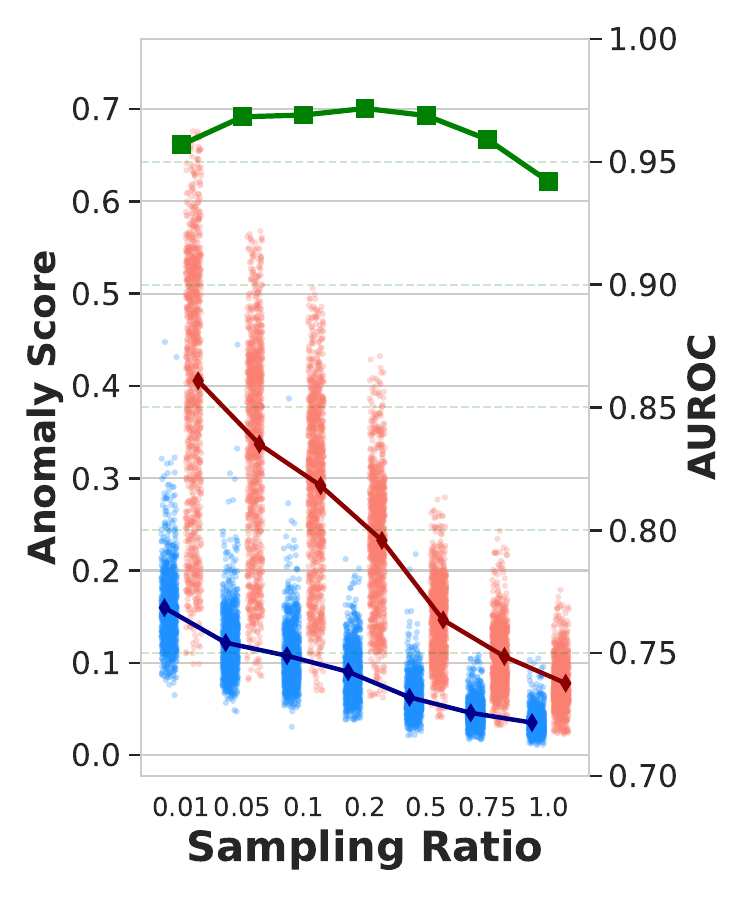}
\end{subfigure}
\begin{subfigure}[b]{0.325\linewidth}
\centering
\includegraphics[width=\linewidth]{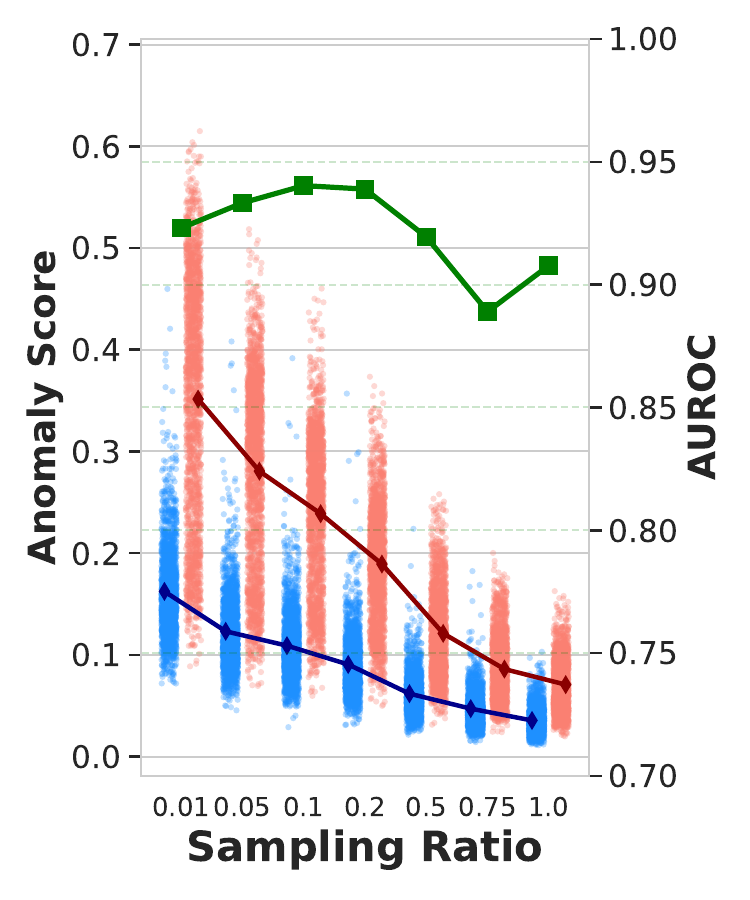}
\end{subfigure}
\begin{subfigure}[b]{0.325\linewidth}
\centering
\includegraphics[width=\linewidth]{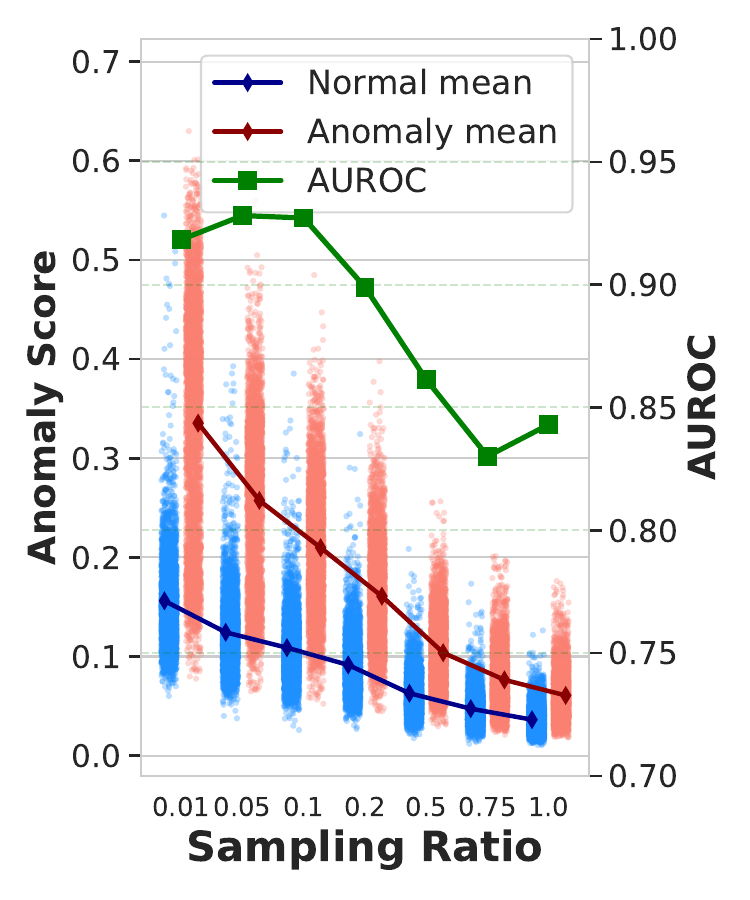}
\end{subfigure}
\caption{Patch-level anomaly score histogram (blue and red scatter plots) with the mean (blue and red lines) and AUROC (green line) of the memory ensemble on the training set for each subsampling ratio, at noise ratios 10\%/20\%/40\%, respectively.}
\label{fig:train_patch}
\end{figure}

\begin{figure}[t]
\centering
\includegraphics[width=.98\linewidth]{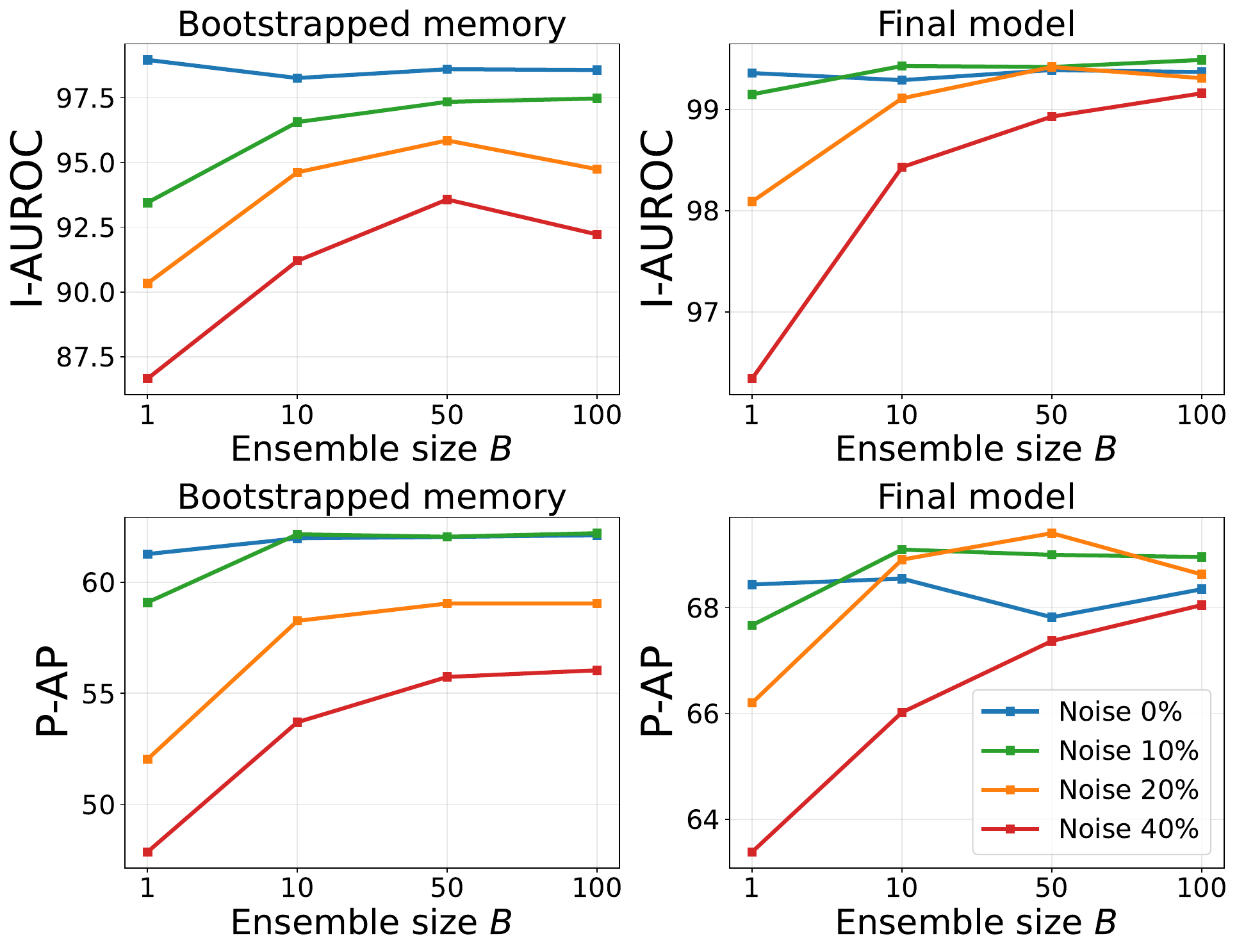}
\caption{
Effect of ensemble size on memory ensemble performance (left) and the resulting effect on the final model (right).
}
\label{fig:ensemble}
\end{figure}

\subsection{Analysis}

We conduct detailed analyses of the hyperparameters of \ours~by experimenting with the Dinomaly baseline on MVTecAD.

\textbf{Bootstrapped memory ensemble.}
We use the ensemble size $B=100$ and subsampling ratio $\rho = 0.1$ by default. The results in Fig.~\ref{fig:sampling_ratio} indicate that the bootstrapped memory (left) prefers a lower sampling ratio for both image-level and pixel-level performance, but its test set performance is overall poor. In contrast, the final model (\ie, \ours) maximizes performance at a moderately low subsampling ratio. This aligns with the trend shown in Fig.~\ref{fig:train_patch}, which shows that the normal and anomalous samples in the training set are maximally separated at a moderate subsampling ratio. Fig.~\ref{fig:ensemble}, on the other hand, shows that model performance increases with the ensemble size. 


\textbf{Train iterations of memory distillation.}
We examine how the number of total iterations $T$ for memory distillation affects performance, validating the trade-off. As shown in Fig.~\ref{fig:iteration}, image-level performance (I-AUROC) of the distilled model improves as $T$ increases but plateaus after approximately $T=500$, while pixel-level performance (P-AP) initially increases but degrades substantially beyond this point. This confirms our claim that extended training enables fine-grained learning but eventually leads to overfitting to anomalous samples, whereas early stopping preserves image-level robustness at the cost of pixel-level precision. Crucially, without a labeled validation set, identifying the optimal stopping point is infeasible in practice. Our progressive data selection strategy resolves this dilemma: by fine-tuning on self-filtered clean data, the final model achieves both high I-AUROC and P-AP that remain stable across a wide range of distillation iterations, effectively decoupling the final performance from the precise choice of distillation duration.

\begin{figure}[t]
\centering
\includegraphics[width=.99\linewidth]{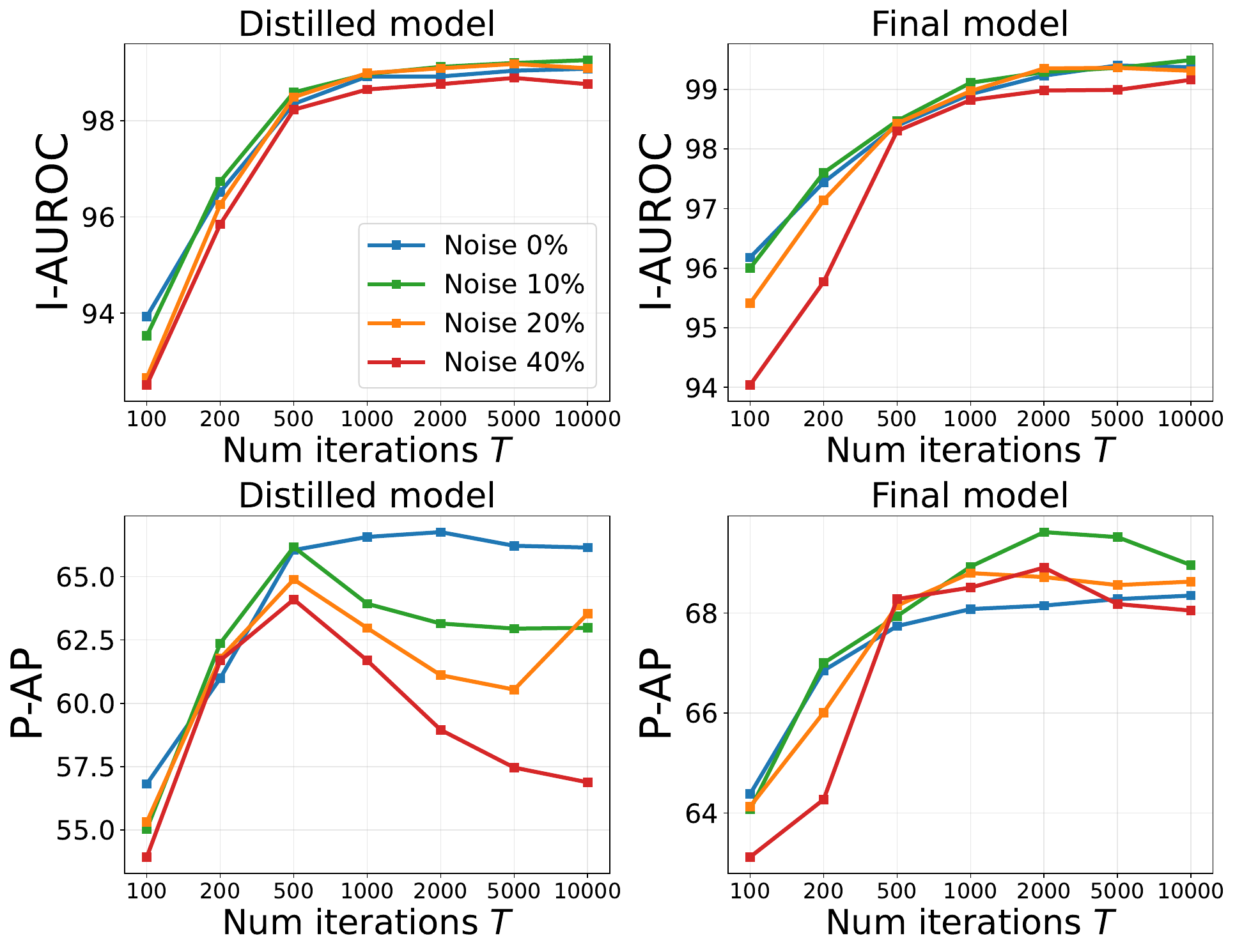}
\caption{
Analysis of total iterations of memory distillation. The distilled model is trained for the number of iterations indicated on the x-axis. The final model is fine-tuned with data selection with $10,000$ total iterations.
}
\label{fig:iteration}
\end{figure}

\textbf{Critical value.}
We ablate the critical value $k$ used in the criterion for data selection. As shown in Fig.~\ref{fig:critical_value},
the performance gap between different critical values is not significant, and the results indicate overall robustness to this hyperparameter.


\subsection{Active Label Correction}
We evaluate the utility of \ours~for active label correction, a human-in-the-loop setting where an annotator reviews training samples ranked by their anomaly scores to identify and remove contaminated data. By inspecting samples in descending order of the selection criterion, an effective ranking allows the annotator to encounter all contaminated samples early, minimizing inspection effort. We compare selection scores from vanilla Dinomaly against those from \ours~on MVTecAD under 10\%, 20\%, and 40\% noise ratios, using two metrics: (1) AUPRC (Area Under the Precision-Recall Curve), which measures the overall ranking quality where higher values indicate contaminated samples are consistently ranked above clean ones; and (2) inspection depth, defined as the minimum fraction of the training set that must be reviewed to identify all contaminated samples, where lower values correspond to reduced annotation cost.

As shown in Fig.~\ref{fig:active_label_correction}, applying \ours~significantly improves both metrics across all noise ratios. The consistent gains in AUPRC demonstrate that \ours~produces more reliable rankings, while the substantial reductions in inspection depth translate directly to annotation savings—enabling practitioners to identify all contaminated samples while reviewing far fewer images. These results highlight that \ours~serves not only as a robust automated detector but also as a practical tool for efficient dataset curation in industrial settings.

\begin{figure}[t]
\centering
\includegraphics[width=.99\linewidth]{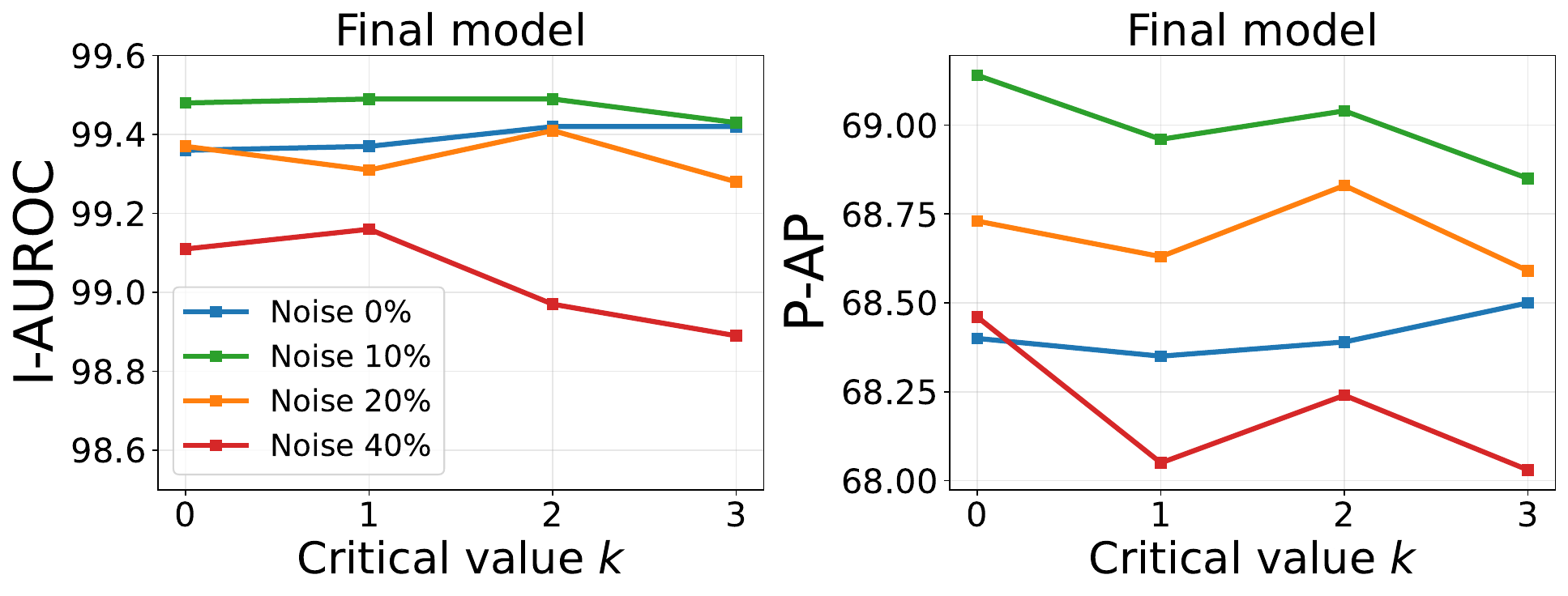}
\caption{
Analysis of critical value used for data selection on the final model.
}
\label{fig:critical_value}
\end{figure}

\begin{figure}[t]
\centering
\includegraphics[width=.90\linewidth]{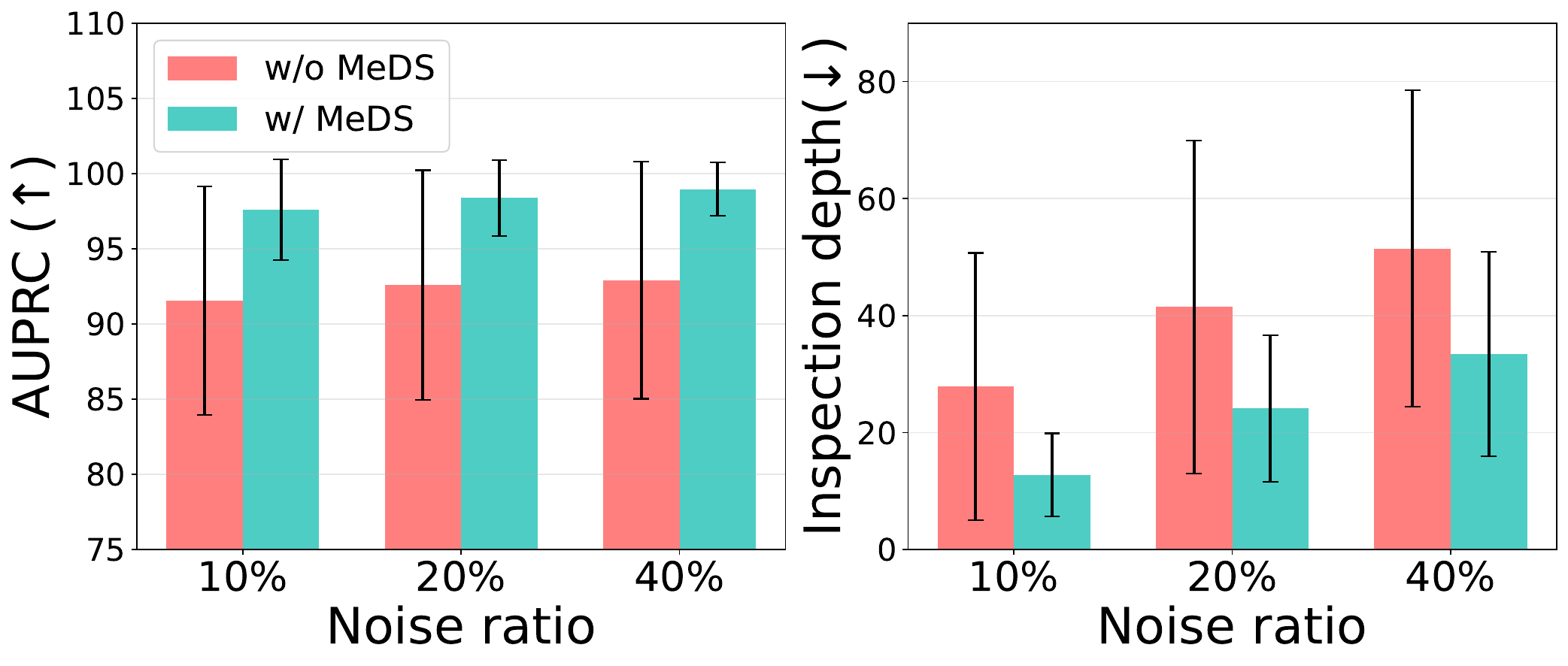}
\caption{
Active label correction results, where bars represent class averages and lines indicate standard deviations.
}
\label{fig:active_label_correction}
\end{figure}

\section{Limitations}
While \ours~demonstrates strong performance across a wide range of noise ratios, several limitations warrant discussion. First, \ours\ occasionally exhibits marginally lower performance compared to the baseline methods on fully clean datasets. This is likely attributable to the median-based selection mechanism, which may inadvertently exclude a small fraction of informative normal samples even in the absence of contamination.
Second, \ours~incurs additional computational overhead relative to standard baseline training. \ours~requires dedicated stages for memory construction and distillation, and the progressive selection mechanism necessitates a full inference pass over the training data. While this additional cost is justified under contaminated scenarios---where baseline methods fail catastrophically---it may be unnecessary when the training data is known to be clean.

\section{Conclusion}
We presented Memory-Distilled Selection (MeDS), a training framework that enables robust anomaly detection under data contamination without requiring knowledge of the noise ratio or noise-ratio-specific hyperparameter tuning. The key insight underlying MeDS is that sparse subsampling of memory banks acts as a low-pass filter that amplifies the separation between normal and anomalous features---a phenomenon we formalize theoretically and validate empirically. By distilling these ensemble-derived scores into a reconstruction score network, MeDS exploits the early-learning bias of neural networks to further sharpen the normal–anomaly boundary while providing a well-initialized starting point for subsequent fine-tuning. The progressive self-selection mechanism then enables extended training on self-filtered clean data, achieving precise pixel-level localization without overfitting to noisy samples. On MVTecAD, VisA, and Real-IAD, we show that MeDS yields consistent improvements across a wide range of noise ratios. Beyond automated detection, we demonstrated that MeDS serves as a practical tool for active label correction, enabling efficient identification and removal of contaminated samples with minimal human effort.

\section*{Acknowledgements}
Jaewoo Park and Seongdeok Bang were supported by the Technology Innovation Program (RS-2025-25455526, Development of an AI-Based Autonomous Control System for Multi-Process Integration in Self-Assembly of Multi-Variant Automotive Lighting Modules) funded By the Ministry of Trade, Industry and Resources(MOTIR, Korea).
Kuan-Chuan Peng was exclusively supported by Mitsubishi Electric Research Laboratories.

\section*{Impact Statement}
This paper presents work whose goal is to advance the field of machine learning. There are many potential societal consequences of our work, none of which we feel must be specifically highlighted here.

\bibliography{refs}
\bibliographystyle{icml2026}

\newpage
\appendix
\onecolumn
\setcounter{thm}{0}

\section{Theory}


\begin{defn*}[Spatial Proportion]
Let $g(\mathcal{D}) \subseteq \mathcal{Z}$ be a fixed training feature set of size $N$. For any query $q$ and radius $r$, let $\pi(q, r)$ denote the cumulative proportion of the training set falling within distance $r$ of $q$:
$$
\pi(q, r) = \frac{1}{N} \sum_{z \in g(\mathcal{D})} \mathds{1}(\|z - q\| \le r)
$$
where $\mathds{1}$ is the indicator function.
\end{defn*}

\begin{defn*}[Density Advantage]
Let $\pi_{norm}(r) = \pi(q_{norm}, r)$ and $\pi_{anom}(r) = \pi(q_{anom}, r)$. We define the Signal $\delta(r)$ as the difference in spatial accumulation between the normal and anomalous queries:
$$
\delta(r) = \pi_{norm}(r) - \pi_{anom}(r)
$$
\end{defn*}

\begin{defn*}[Expected Gap]
Let $D(q, \mathcal{M})$ be the random variable representing the distance from a query $q$ to its nearest neighbor in a memory bank $\mathcal{M}$ of size $m$, sampled independently with replacement from $g(\mathcal{D})$. The expected distance is defined as the integral of the survival function:
$$
\mathbb{E}[D(q, \mathcal{M})] = \int_{0}^{\infty} \mathbb{P}(D(q, \mathcal{M}) > r) \, \mathrm{d}r = \int_{0}^{\infty} (1 - \pi(q, r))^m \, \mathrm{d}r
$$
Let $\Delta(m)$ be the difference in expected nearest neighbor distance between the anomaly and normal queries:
$$
\Delta(m) = \mathbb{E}[D(q_{anom}, \mathcal{M})] - \mathbb{E}[D(q_{norm}, \mathcal{M})]
$$
\end{defn*}

\begin{ass*}[Strict Separability]
We assume the normal query dominates the anomaly query in neighbor density. That is, $\delta(r) \ge 0$ for all $r$. Moreover, there exist $0 \le r_0 < r_1$ such that, for every $r \in [r_0, r_1]$, $\delta(r) > 0$ and $\pi_{norm}(r) < 1$.
\end{ass*}


\begin{thm}[Gap Decomposition]
Under the Strict Separability assumption, the expected gap satisfies $\Delta(m) > 0$ and can be decomposed as
$$
\Delta(m) = \Delta_0(m) + \epsilon_0(m)
$$
where
$$
\Delta_0(m) = \int_{0}^{\infty} \delta(r) \cdot \omega(m,r)  \mathrm{d}r
$$
with
$
\omega(m, r) \coloneqq  m (1 - \pi_{norm}(r))^{m-1},
$
and  $\epsilon_0(m) = \int R(r; m) \mathrm{d}r$ with
$$
R(r; m) = o(\delta(r)) \quad \text{as } \delta(r) \to 0
$$
Moreover, for each fixed $r$ with $\pi_{norm}(r) \in (0, 1)$, $\omega(m, r)$ is unimodal in $m$, attaining its unique maximum at $m^* = -1/\ln(1 - \pi_{norm}(r))$.
\end{thm}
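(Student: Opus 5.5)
Everything reduces to the survival-function representation in the Expected Gap definition. Write $a(r) = 1-\pi_{norm}(r)$ and $b(r) = 1-\pi_{anom}(r) = a(r)+\delta(r)$. Then
\[
\Delta(m) = \int_0^\infty \bigl[(a(r)+\delta(r))^m - a(r)^m\bigr]\,\mathrm{d}r .
\]
Both integrands vanish once $r$ exceeds the largest distance from either query to $g(\mathcal{D})$, because $g(\mathcal{D})$ is finite. So every integral below runs over a bounded interval and finiteness is automatic.

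\textbf{Positivity.} By Strict Separability, $\delta(r)\ge 0$ for all $r$, so $t\mapsto t^m$ being nondecreasing on $[0,1]$ makes the integrand nonnegative everywhere. On $[r_0,r_1]$ we have $b(r) > a(r) \ge 0$, hence $b^m - a^m > 0$ there (for $m\ge 1$). The functions $\pi(q,\cdot)$ are right-continuous step functions with finitely many jumps, so the integrand is a positive step function on an interval of positive length. Its integral is therefore strictly positive, which gives $\Delta(m)>0$. The extra hypothesis $\pi_{norm}<1$ is not needed for this part; it only ensures the weight below is nonzero.

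\textbf{Decomposition.} For fixed $r$ and $m$, expand $t\mapsto t^m$ around $a$ by Taylor's theorem:
\[
(a+\delta)^m - a^m = m a^{m-1}\delta + R(r;m),
\]
with Lagrange remainder
\[
R(r;m) = \tfrac{m(m-1)}{2}\,\xi^{m-2}\delta^2, \qquad \xi\in[a,a+\delta]\subseteq[0,1].
\]
Define $\omega(m,r)=m a(r)^{m-1}$ and $\epsilon_0(m)=\int R(r;m)\,\mathrm{d}r$. Integrating gives $\Delta=\Delta_0+\epsilon_0$ directly. Since $\xi\le 1$, we get $|R|\le \binom{m}{2}\delta^2$, so $R/\delta\to 0$ as $\delta\to 0$ uniformly in $r$ for fixed $m$. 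This is exactly the claim $R(r;m)=o(\delta(r))$. It also shows $R\ge 0$ (for $m\ge 2$), so $\Delta_0$ is a lower bound on $\Delta$. Nonnegativity of $\delta$ comes directly from the assumption.

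\textbf{Unimodality.} Fix $r$ with $p=\pi_{norm}(r)\in(0,1)$, set $c=-\ln(1-p)>0$, and treat $m$ as a continuous variable, so $\omega(m)=m e^{-c(m-1)}$. Then
\[
\partial_m \log\omega = \frac{1}{m} - c,
\]
which is positive for $m<1/c$ and negative for $m>1/c$. Hence $\omega$ is strictly increasing and then strictly decreasing, with unique maximizer $m^*=1/c=-1/\ln(1-p)$. For integer $m$ the sequence is unimodal, with its maximum at $\lfloor m^*\rfloor$ or $\lceil m^*\rceil$. I would state this as a remark rather than a separate claim.

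\textbf{Main obstacle.} The calculations are routine; the delicate part is making ``$o(\delta(r))$'' precise, since $\delta$ is a step function of $r$ and not a limiting parameter. I would define the statement pointwise via the explicit bound $|R(r;m)|\le\binom{m}{2}\delta(r)^2$ and note that it is uniform in $r$. A second care point is using the with-replacement sampling model from the definition, which is what makes $\mathbb{P}(D>r)=(1-\pi)^m$ exact, as opposed to the without-replacement subsampling used in the main text.
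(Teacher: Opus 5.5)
Your proposal is correct and follows essentially the same route as the paper: the survival-function form $(1-\pi)^m$ of the tail probability, then a second-order Taylor expansion with Lagrange remainder (your expansion of $t\mapsto t^m$ around $1-\pi_{norm}(r)$ is the paper's expansion of $(1-u)^m$ around $\pi_{norm}(r)$ after substituting $t=1-u$), then the sign analysis of $\partial_m\bigl(m p^{m-1}\bigr)$ for unimodality. Your direct positivity argument from the integrand matches the paper's Probability Ordering lemma. The paper's final proof instead combines $\Delta_0>0$ with $\epsilon_0\ge 0$, and that is the only place where the hypothesis $\pi_{norm}<1$ on $[r_0,r_1]$ is actually needed, as you correctly note.
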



To prove the main theorem, we first establish the below:

\begin{lem}[Probability Ordering]
Under Assumption 1, the probability that the nearest neighbor distance exceeds $r$ satisfies the following inequalities:
\begin{itemize}
    \item For all $r \ge 0$:
    $$ \mathbb{P}(D(q_{norm}, \mathcal{M}) > r) \le \mathbb{P}(D(q_{anom}, \mathcal{M}) > r) $$
    \item For every $r \in [r_0,r_1]$:
    $$ \mathbb{P}(D(q_{norm}, \mathcal{M}) > r) < \mathbb{P}(D(q_{anom}, \mathcal{M}) > r) $$
\end{itemize}
Consequently, $\mathbb{E}[D(q_{norm}, \mathcal{M})] < \mathbb{E}[D(q_{anom}, \mathcal{M})]$.
\end{lem}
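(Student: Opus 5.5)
The plan is to work directly with the closed form of the survival function given in the definition of the Expected Gap. Because $\mathcal{M}$ consists of $m$ draws made independently and with replacement from $g(\mathcal{D})$, the event $D(q,\mathcal{M})>r$ occurs exactly when every draw lies outside the closed ball of radius $r$ around $q$. Hence
\[
\mathbb{P}(D(q,\mathcal{M})>r)=(1-\pi(q,r))^m ,
\]
and both probabilities in the lemma have this form, one with $\pi_{norm}(r)$ and one with $\pi_{anom}(r)$.

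For the first item, the Strict Separability assumption gives $\delta(r)\ge 0$, that is, $\pi_{norm}(r)\ge \pi_{anom}(r)$ for every $r\ge0$. Consequently $0\le 1-\pi_{norm}(r)\le 1-\pi_{anom}(r)\le 1$. Since $t\mapsto t^m$ is nondecreasing on $[0,1]$, raising both sides to the power $m$ preserves the inequality, which proves the weak ordering.

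For the second item, fix $r\in[r_0,r_1]$. The assumption gives $\delta(r)>0$ and $\pi_{norm}(r)<1$, so $0<1-\pi_{norm}(r)<1-\pi_{anom}(r)$. The function $t\mapsto t^m$ is strictly increasing on $[0,\infty)$ for $m\ge1$, so the strict inequality survives the $m$-th power. The condition $\pi_{norm}(r)<1$ is not needed for strictness here, because strict monotonicity holds even at $t=0$, but it keeps the left-hand side nonzero. I will point out that the argument requires $m\ge1$, which is implicit since $\mathcal{M}$ is a memory bank.

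For the consequence, I will write
\[
\mathbb{E}[D(q_{anom},\mathcal{M})]-\mathbb{E}[D(q_{norm},\mathcal{M})]=\int_0^\infty \Big[(1-\pi_{anom}(r))^m-(1-\pi_{norm}(r))^m\Big]\,\mathrm{d}r .
\]
The integrand is nonnegative everywhere by the first item and strictly positive on $[r_0,r_1]$ by the second.

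This last step is the main obstacle. Strict positivity on a nondegenerate interval gives a strictly positive integral only once measurability and finiteness are in hand. Both are straightforward. Each $\pi(q,\cdot)$ is a right-continuous step function, so the integrand is a bounded, piecewise-constant measurable function. Both expectations are finite because $g(\mathcal{D})$ is a finite set, so $\pi(q,r)=1$ for all large $r$ and the survival functions vanish beyond a finite radius.

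To get strict positivity I will bound the integral from below by the integral over $[r_0,r_1]$. There the integrand is a positive step function with finitely many pieces, so it is bounded below by a positive constant. Since $r_1>r_0$, this yields a strictly positive lower bound and hence the strict inequality of expectations.
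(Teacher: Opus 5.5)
Your proposal is correct and follows the paper's route. Both arguments use the closed-form survival function $(1-\pi(q,r))^m$ and the monotonicity of the $m$-th power (you use $t\mapsto t^m$ increasing, the paper uses $u\mapsto(1-u)^m$ decreasing), then integrate the tail difference, which is nonnegative everywhere and strictly positive on $[r_0,r_1]$. Your extra checks are valid refinements the paper leaves implicit: the expectations are finite because $g(\mathcal{D})$ is finite, the integrand has a positive lower bound on $[r_0,r_1]$, and $\pi_{norm}(r)<1$ is not actually needed for strictness.
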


\begin{proof}
\textbf{Step 1: Probability formulation.}
The event $D(q, \mathcal{M}) > r$ implies that all $m$ independent samples drawn from $g(\mathcal{D})$ fall outside the ball $B(q, r)$. Let $C(q, r) = N \cdot \pi(q, r)$ be the count of neighbors within radius $r$. The probability that a single random sample falls outside this radius is $1 - \frac{C(q, r)}{N} = 1 - \pi(q, r)$.
Since the $m$ memory bank samples are independent (sampled with replacement), the joint probability is:
$$
\mathbb{P}(D(q, \mathcal{M}) > r) = \left( 1 - \pi(q, r) \right)^m
$$

\textbf{Step 2: Monotonicity analysis.}
Define the function $g(u) = (1 - u)^m$ for $u \in [0, 1]$. We analyze its monotonicity by differentiating with respect to the proportion $u$:
$$
g'(u) = -m(1 - u)^{m-1}
$$
For any $u < 1$ and $m \ge 1$, the derivative is strictly negative ($g'(u) < 0$). Thus, $g(u)$ is a strictly monotonically decreasing function of $u$. A higher proportion of neighbors $\pi(q, r)$ strictly reduces the probability of the distance exceeding $r$.

\textbf{Step 3: Comparison.}
We apply the Strict Separability Assumption to compare the probabilities:
\begin{itemize}
    \item \textbf{Case 1 (all $r \ge 0$):} By assumption, $\delta(r) \ge 0 \implies \pi_{norm}(r) \ge \pi_{anom}(r)$. Since $g(u)$ is decreasing, $g(\pi_{norm}) \le g(\pi_{anom})$. Thus, $\mathbb{P}_{norm} \le \mathbb{P}_{anom}$.
    \item \textbf{Case 2 ($r \in [r_0,r_1]$):} By assumption, $\delta(r) > 0$ and $\pi_{norm}(r)<1$, so $\pi_{norm}(r) > \pi_{anom}(r)$ with both arguments in the region where $g$ is strictly decreasing. Hence $g(\pi_{norm}) < g(\pi_{anom})$, and $\mathbb{P}_{norm} < \mathbb{P}_{anom}$.
\end{itemize}
The expected distance is the integral of these tail probabilities over $r \in [0, \infty)$. Since the difference $(\mathbb{P}_{anom} - \mathbb{P}_{norm})$ is non-negative everywhere and strictly positive on the interval $[r_0,r_1]$, the integral for the normal query is strictly smaller.
\end{proof}


\begin{lem}[Local Linearization]
Let $\psi(r; m)$ denote the gap density at radius $r$:
$$
\psi(r; m) = (1 - \pi_{anom}(r))^m - (1 - \pi_{norm}(r))^m
$$
For any fixed $m$ and $r$, this quantity expands as:
$$
\psi(r; m) = \delta(r) \cdot \omega(m, r) + R(r; m)
$$
where the leading weight is $\omega(m, r) = m(1 - \pi_{norm}(r))^{m-1}$. The remainder term $R(r; m)$ is given by the Lagrange form:
$$
R(r; m) = \frac{m(m-1)}{2} (1 - \xi)^{m-2} (\delta(r))^2
$$
for some $\xi \in (\pi_{anom}(r), \pi_{norm}(r))$. Consequently, $R(r; m) \ge 0$ (for $m \ge 1$) and $R(r; m) = o(\delta(r))$ as $\delta(r) \to 0$.
\end{lem}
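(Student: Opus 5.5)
We treat $\psi(r;m)$ as the difference of a single smooth function evaluated at two points and apply Taylor's theorem with the Lagrange remainder. Fix $m \ge 1$ and $r$, and set $h(u) = (1-u)^m$ on $[0,1]$. Write $a = \pi_{anom}(r)$ and $b = \pi_{norm}(r)$. By the Strict Separability assumption $b \ge a$, with $b - a = \delta(r)$. Then
\[
\psi(r;m) = h(a) - h(b).
\]

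The key step is to expand $h$ around the normal point $b$ rather than the anomalous one, since the stated leading weight $\omega(m,r) = m(1-\pi_{norm}(r))^{m-1}$ is evaluated at $\pi_{norm}$. Since $h$ is a polynomial, it is $C^\infty$, and Taylor's theorem with second-order Lagrange remainder gives
\[
h(a) = h(b) + h'(b)(a-b) + \tfrac12 h''(\xi)(a-b)^2
\]
for some $\xi$ strictly between $a$ and $b$ when $a \ne b$. Substituting $h'(u) = -m(1-u)^{m-1}$ and $h''(u) = m(m-1)(1-u)^{m-2}$, and using $a - b = -\delta(r)$, the linear term becomes $m(1-b)^{m-1}\delta(r) = \delta(r)\,\omega(m,r)$. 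The remainder becomes $\frac{m(m-1)}{2}(1-\xi)^{m-2}\delta(r)^2$, which is exactly the claimed form.

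The degenerate case $\delta(r) = 0$ needs a separate remark: there $\psi = 0$, and the identity holds with $R = 0$ for any choice of $\xi$. The case $m = 1$ also needs a remark: the factor $m(m-1)$ vanishes, so no issue arises with $(1-\xi)^{-1}$, which matters when $\xi$ is near $1$.

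Nonnegativity of $R$ follows because $\xi \in [0,1]$ gives $1 - \xi \ge 0$, and $m(m-1) \ge 0$ for $m \ge 1$. The only care needed is for $m = 2$ with a convention such as $0^0 = 1$, which still gives a nonnegative value.

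For the little-$o$ claim, $|R|/\delta(r) \le \frac{m(m-1)}{2}\,\delta(r)$ because $(1-\xi)^{m-2} \le 1$ for $m \ge 2$, and $R \equiv 0$ for $m = 1$. Hence $R/\delta \to 0$ as $\delta(r) \to 0$, uniformly in $r$ for fixed $m$.

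The main subtlety is bounding $(1-\xi)^{m-2}$ by $1$ uniformly. This needs $m \ge 2$, with $m = 1$ handled separately as above, and it needs $\xi \in [0,1]$. The latter holds because both $\pi$ values are proportions in $[0,1]$.
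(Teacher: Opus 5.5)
Your proposal is correct and follows essentially the same route as the paper: a second-order Taylor expansion of $(1-u)^m$ about $\pi_{norm}(r)$ with perturbation $-\delta(r)$ and a Lagrange remainder, followed by reading off nonnegativity and the little-$o$ behaviour. Your explicit bound $(1-\xi)^{m-2}\le 1$ for $m\ge 2$ (with $R\equiv 0$ when $m=1$) makes precise the step from $R=O(\delta^2)$ to $R=o(\delta)$, which the paper only asserts.
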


\begin{proof}
Consider the function $f(u) = (1 - u)^m$ defined on $u \in [0, 1]$. We seek to approximate the difference $f(\pi_{anom}) - f(\pi_{norm})$.
Recalling that $\pi_{anom}(r) = \pi_{norm}(r) - \delta(r)$, we apply Taylor's theorem with the Lagrange remainder to expand $f(u)$ around the point $u_0 = \pi_{norm}(r)$ with perturbation $h = -\delta(r)$. There exists some $\xi$ strictly between $\pi_{anom}(r)$ and $\pi_{norm}(r)$ such that:
$$
f(u_0 + h) = f(u_0) + f'(u_0)h + \frac{f''(\xi)}{2!} h^2
$$
Computing the derivatives with respect to $u$:
\begin{align*}
f'(u) &= -m(1 - u)^{m-1} \\
f''(u) &= m(m-1)(1 - u)^{m-2}
\end{align*}
Substituting $h = -\delta(r)$ and $u_0 = \pi_{norm}(r)$:
$$
(1 - \pi_{anom})^m = (1 - \pi_{norm})^m + \left[ -m(1 - \pi_{norm})^{m-1} \right](-\delta(r)) + \frac{1}{2} \left[ m(m-1)(1 - \xi)^{m-2} \right] (-\delta(r))^2
$$
Rearranging terms to isolate the gap density $\psi(r; m)$:
$$
(1 - \pi_{anom})^m - (1 - \pi_{norm})^m = \delta(r) \underbrace{\left[ m(1 - \pi_{norm})^{m-1} \right]}_{\omega(m, r)} + \underbrace{\frac{m(m-1)}{2} (1 - \xi)^{m-2} (\delta(r))^2}_{R(r; m)}
$$
Since $m \ge 1$ and $(1-\xi) \ge 0$, the remainder $R(r; m)$ is non-negative. Since $R(r; m) = O((\delta(r))^2)$, we have $R(r; m) = o(\delta(r))$ as $\delta(r) \to 0$.
\end{proof}


\begin{lem}[Unimodality of $\omega$]
For any fixed $r$ with $\pi_{norm}(r) \in (0, 1)$, the weight function $\omega(m, r) = m(1 - \pi_{norm}(r))^{m-1}$ is unimodal in $m > 0$, attaining its unique maximum at $m^* = -1/\ln(1 - \pi_{norm}(r))$.
\end{lem}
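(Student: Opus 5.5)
Fix $r$ with $p := \pi_{norm}(r) \in (0,1)$ and set $a := 1-p \in (0,1)$. Write $\omega(m,r) = m a^{m-1}$ and treat $m$ as a continuous variable on $(0,\infty)$. Since $\omega > 0$ there, I will study the logarithm
\[
\ell(m) := \ln \omega(m,r) = \ln m + (m-1)\ln a .
\]
Unimodality of $\omega$ in $m$ is the same as unimodality of $\ell$, because $\ln$ is strictly increasing.

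The first step is to differentiate:
\[
\ell'(m) = \frac{1}{m} + \ln a .
\]
Because $a \in (0,1)$, we have $\ln a < 0$, so $c := -\ln a = -\ln(1-p) > 0$. Then $\ell'(m) = 1/m - c$ is strictly decreasing in $m$. It is positive for $m < 1/c$, zero at $m = 1/c$, and negative for $m > 1/c$. Hence $\omega$ is strictly increasing on $(0, m^*]$ and strictly decreasing on $[m^*, \infty)$, where
\[
m^* = \frac{1}{c} = -\frac{1}{\ln(1-\pi_{norm}(r))} .
\]
This gives unimodality and shows the maximum is unique. As a cross-check, $\ell''(m) = -1/m^2 < 0$, so $\ell$ is strictly concave and $\omega$ is log-concave.

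For completeness I would also record the boundary behaviour. As $m \to 0^+$, $\omega \to 0$. As $m \to \infty$, $\omega \to 0$ because $a^m$ decays geometrically. This confirms that $m^*$ is a genuine interior maximum rather than a supremum approached at an endpoint.

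The only delicate point is interpretation. In the paper $m$ is a memory size, so it is an integer. If unimodality over integers is required, it follows from the continuous statement: a function that is strictly increasing and then strictly decreasing on $(0,\infty)$ restricts to a unimodal sequence on $\mathbb{N}$. That sequence has its maximum at $\lfloor m^* \rfloor$ or $\lceil m^* \rceil$, or at both if they tie. So the stated $m^*$ should be read as the continuous maximizer. I do not expect any real obstacle beyond stating this convention clearly.
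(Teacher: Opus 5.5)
Your proposal is correct and is essentially the paper's proof. The paper differentiates $\omega$ directly to get $p^{m-1}(1+m\ln p)$, which equals $\omega\,\ell'(m)$ with $\omega>0$, so the sign analysis and the maximizer $m^*=-1/\ln(1-\pi_{norm}(r))$ come out identically. Your remarks on the boundary behaviour and on restricting to integer $m$ are correct additions that the paper does not include.
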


\begin{proof}
Let $p = 1 - \pi_{norm}(r) \in (0,1)$. Treating $m$ as a continuous variable, we differentiate $\omega(m) = m p^{m-1}$:
$$
\frac{\partial \omega}{\partial m} = p^{m-1}(1 + m \ln p)
$$
Since $p^{m-1} > 0$ and $\ln p < 0$, the derivative vanishes at $m^* = -1/\ln p > 0$, is positive for $m < m^*$, and negative for $m > m^*$. Thus $\omega$ is strictly increasing then strictly decreasing, i.e., unimodal.
\end{proof}


\begin{proof}[Proof of Theorem]
\label{proof:thm_proof}
The expected gap $\Delta(m)$ is defined as the integral of the difference in survival probabilities over all radii:
$$
\Delta(m) = \int_{0}^{\infty} \left[ (1 - \pi_{anom}(r))^m - (1 - \pi_{norm}(r))^m \right] \mathrm{d}r
$$
Using the notation from Lemma 2, the integrand is exactly $\psi(r; m)$. We apply the Local Linearization from Lemma 2 to the integrand:
$$
\Delta(m) = \int_{0}^{\infty} \left( \delta(r) \cdot \omega(m, r) + R(r; m) \right) \mathrm{d}r
$$
By the linearity of the integral, we can decompose this into two distinct components:
$$
\Delta(m) = \underbrace{\int_{0}^{\infty} \delta(r) \cdot \omega(m, r) \, \mathrm{d}r}_{\Delta_0(m)} + \underbrace{\int_{0}^{\infty} R(r; m) \, \mathrm{d}r}_{\epsilon_0(m)}
$$
This establishes the decomposition $\Delta(m) = \Delta_0(m) + \epsilon_0(m)$.

To prove $\Delta(m) > 0$, we rely on the Strict Separability Assumption. We are given that $\delta(r) \ge 0$ for all $r$, and that $\delta(r) > 0$ with $\pi_{norm}(r)<1$ for every $r \in [r_0,r_1]$.
\begin{enumerate}
    \item Since $\delta(r) \ge 0$ and the weight function $\omega(m, r) \ge 0$, the integrand of $\Delta_0(m)$ is non-negative everywhere. Moreover, on $[r_0,r_1]$, we have $\delta(r)>0$ and $\omega(m,r)>0$, so the integrand is strictly positive on an interval of positive length. Thus, $\Delta_0(m) > 0$.
    \item By Lemma 2, the remainder $R(r; m) \ge 0$ due to convexity. Thus, $\epsilon_0(m) \ge 0$.
\end{enumerate}
Consequently, $\Delta(m) = \Delta_0(m) + \epsilon_0(m) > 0$. The unimodality of $\omega(m, r)$ follows from Lemma 3.
\end{proof}

\section{Full Training Specification}
\label{sec:full_train_spec}

\subsection{Full Training Algorithm}

Algorithm~\ref{alg:meds} provides the algorithmic flow of \ours training.

\begin{algorithm}[tb]
\caption{\ours~Training Algorithm}
\label{alg:meds}
\begin{algorithmic}
    \STATE {\bfseries Input:} Training set $\mathcal{D}$, pretrained encoder $g$, reconstruction score network $s_\theta$, ensemble size $B$, subsampling ratio $\rho$, total iterations $T$, final critical value $k$
    
    \STATE \textit{// Phase 1: Bootstrapped Memory Construction}
    \STATE Extract patch features $g(\mathcal{D}) \leftarrow \{ g(x)_{hw} \mid x \in \mathcal{D}, (h,w) \in [H] \times [W] \}$
    \FOR{$b = 1$ {\bfseries to} $B$}
        \STATE $\mathcal{M}_b \leftarrow$ uniformly sample $\rho |g(\mathcal{D})|$ features from $g(\mathcal{D})$
    \ENDFOR
    \STATE Define $s_{\mathbb{M}}(x) \leftarrow \frac{1}{B} \sum_{b=1}^B s_{\mathcal{M}_b}(x)$
    
    \STATE \textit{// Phase 2: Memory Distillation}
    \STATE Initialize $\theta$ of $s_\theta$ with random weights
    \FOR{$t = 1$ {\bfseries to} $T$}
        \STATE Sample mini-batch $\mathcal{B}$ from $\mathcal{D}$
        \STATE $\mathcal{L} \leftarrow \frac{1}{|\mathcal{B}|} \sum_{x \in \mathcal{B}} \lVert s_{\mathbb{M}}(x) - s_\theta(x) \rVert_2$
        \STATE Update $\theta$ by gradient descent on $\mathcal{L}$
    \ENDFOR
    \STATE Store distilled parameters $\theta_0 \leftarrow \theta$
    
    \STATE \textit{// Phase 3: Fine-tuning with Progressive Selection}
    \STATE Initialize selected set $\mathcal{S}_0 \leftarrow \mathcal{D}$
    \FOR{$t = 1$ {\bfseries to} $T$}
        \IF{$t$ corresponds to an epoch boundary}
            \STATE $\alpha_t \leftarrow \min(2t / T, 1)$; \quad $k_t \leftarrow k \cdot t / T$
            \FOR{$x \in \mathcal{D}$}
                \STATE $\eta_t(x) \leftarrow (1 - \alpha_t) \max_{h,w} s_{\theta_0}(x)_{hw} + \alpha_t \max_{h,w} s_\theta(x)_{hw}$
            \ENDFOR
            \STATE $\tau_t \leftarrow \mathrm{Median}(\{\eta_t(x)\}_{x \in \mathcal{D}}) + k_t \cdot \mathrm{MAD}(\{\eta_t(x)\}_{x \in \mathcal{D}})$
            \STATE $\mathcal{S}_t \leftarrow \{ x \in \mathcal{D} : \eta_t(x) < \tau_t \}$
        \ENDIF
        \STATE Sample mini-batch $\mathcal{B}$ from $\mathcal{S}_t$
        \STATE $\mathcal{L} \leftarrow \frac{1}{|\mathcal{B}|} \sum_{x \in \mathcal{B}}  s_\theta(x) $
        \STATE Update $\theta$ by gradient descent on $\mathcal{L}$
    \ENDFOR
    
    \STATE {\bfseries Output:} Trained reconstruction score network $s_\theta$
\end{algorithmic}
\end{algorithm}

\subsection{Implementation Details and Additional Training Configurations}

\paragraph{Memory Construction}
Memory scores are computed once per sample and cached for reuse during distillation to reduce computational overhead. Subsampling is performed at the image level rather than at the patch feature level to preserve structural bias within each image. We assume that class labels (\ie, product type) are available throughout training. This assumption is reasonable in practical industrial settings, where each product is typically associated with a barcode or identifier that encodes product type information. When constructing memory scores, images from different classes are processed independently to prevent cross-class interference in the feature space.

\paragraph{Memory Distillation and Fine-tuning}
For simplicity and to ensure fair comparison, we use identical hyperparameter configurations for both memory distillation and fine-tuning with data selection, including learning rate, batch size, and optimizer settings. The two stages differ only in their objective functions: memory distillation minimizes the score-based distillation loss (Eq.~\eqref{eq:memory_distillation_loss}), whereas fine-tuning optimizes reconstruction on the progressively selected subset. This unified configuration simplifies the training pipeline and reduces the need for stage-specific hyperparameter tuning.

\paragraph{Data Selection}

Data selection is performed at the beginning of each epoch (\ie, after a complete pass over the current selected subset). Rather than using the raw maximum patch score, which can be sensitive to outliers, we compute a robust maximum by averaging the top $n\%$ of patch scores for each image. Specifically, let $\{s_i \}_{i=1}^{HW}$ denote the patch-level scores of image $x$ sorted in descending order. The robust maximum is defined as:
\begin{equation}
\max_{h,w} s_\theta(x)_{hw} = \frac{1}{\lceil n \cdot HW / 100 \rceil} \sum_{i=1}^{\lceil n \cdot HW / 100 \rceil} s_{i}
\end{equation}
where $\lceil \cdot \rceil$ is the ceil operation.
We set $n = 1$ throughout our experiments; an analysis of this design choice is provided in Sec.~\ref{subsec:analysis_select_score}. To improve efficiency during fine-tuning, the initial selection scores $\hat{s}_{\theta_0}(x)$ from the distilled model are precomputed and cached, thereby avoiding redundant forward passes.

The interpolation coefficient follows the schedule $\alpha_t = \min(2t/T, 1)$, ensuring that the selection criterion transitions smoothly from reliance on the distilled model to full reliance on the fine-tuned model by the midpoint of training. The threshold critical value increases linearly as $k_t = k \cdot (t/T)$, where $k$ is a pre-specified hyperparameter that controls the final selection stringency. Leveraging the class information associated with each image, data selection is performed independently per class; specifically, the median and MAD used for threshold computation are calculated within each class separately. This class-wise approach addresses the issue of varying anomaly score ranges across different product categories, ensuring consistent selection behavior regardless of class-specific score distributions.

\paragraph{Adaptation to Baseline Methods}
When applying \ours to the baseline methods HVQ, Dinomaly, and INPFormer, we largely follow the original training configurations with the following modifications. For HVQ, we extend training to 200 epochs to ensure convergence under the data selection regime. For Dinomaly, we use a moderate discarding rate of 0.5 for the hard-mining loss to balance between noise robustness and learning efficiency. For INPFormer, we train for 100 epochs. All other hyperparameters remain consistent with the original implementations.

\section{Additional Experimental Results}

\begin{figure}[t]
\centering
\includegraphics[width=.6\linewidth]{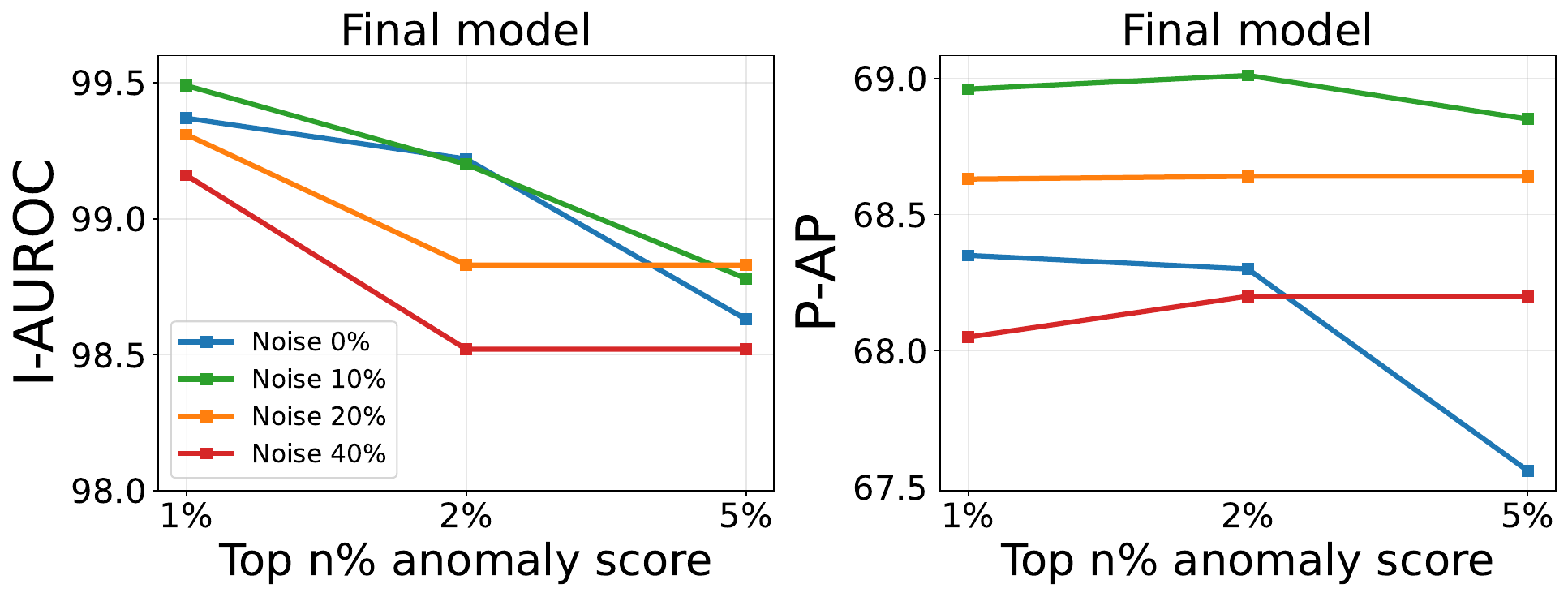}
\caption{
Analysis of the robust maximum for the selection score formulation.
}
\label{fig:selection}
\end{figure}

\begin{table*}[ht]
\centering
\caption{
Ablation on MVTecAD dataset and INP-Former model where \textbf{bold highlights best performance}. Memory indicates usage of boostrapped memory ensemble, init shows how the final model is initialized, and criteria denotes the model $s_{\theta_0}$ used for data selection.
}
\resizebox{\linewidth}{!}{

}
\label{table:abl_inpformer}
\end{table*}

\begin{table*}[t]
\centering
\caption{Per-class performance on MVTec-AD dataset for multi-class anomaly detection with AUROC/AP/F1-max metrics. Noise Ratio 0}
\label{table:mvtec_results_class_wise_image_nr0}
\resizebox{\textwidth}{!}{%
%
%
}
\end{table*}

\begin{table*}[t]
\centering
\caption{Per-class performance on MVTec-AD dataset for multi-class anomaly localization with AUROC/AP/F1-max/AUPRO metrics. Noise Ratio 0}
\label{table:mvtec_results_class_wise_pixel_nr0}
\resizebox{\textwidth}{!}{%
%
%
}
\end{table*}
\begin{table*}[t]
\centering
\caption{Per-class performance on MVTec-AD dataset for multi-class anomaly detection with AUROC/AP/F1-max metrics. Noise Ratio 10}
\label{table:mvtec_results_class_wise_image_nr10}
\resizebox{\textwidth}{!}{%
%
%
}
\end{table*}

\begin{table*}[t]
\centering
\caption{Per-class performance on MVTec-AD dataset for multi-class anomaly localization with AUROC/AP/F1-max/AUPRO metrics. Noise Ratio 10}
\label{tab:mvtec_results_class_wise_pixel_nr10}
\resizebox{\textwidth}{!}{%
%
%
}
\end{table*}
\begin{table*}[t]
\centering
\caption{Per-class performance on MVTec-AD dataset for multi-class anomaly detection with AUROC/AP/F1-max metrics. Noise Ratio 20}
\label{table:mvtec_results_class_wise_image_nr20}
\resizebox{\textwidth}{!}{%
%
%
}
\end{table*}

\begin{table*}[t]
\centering
\caption{Per-class performance on MVTec-AD dataset for multi-class anomaly localization with AUROC/AP/F1-max/AUPRO metrics. Noise Ratio 20}
\label{table:mvtec_results_class_wise_pixel_nr20}
\resizebox{\textwidth}{!}{%
%
%
}
\end{table*}
\begin{table*}[t]
\centering
\caption{Per-class performance on MVTec-AD dataset for multi-class anomaly detection with AUROC/AP/F1-max metrics. Noise Ratio 20}
\label{table:mvtec_results_class_wise_image_nr40}
\resizebox{\textwidth}{!}{%
%
%
}
\end{table*}

\begin{table*}[t]
\centering
\caption{Per-class performance on MVTec-AD dataset for multi-class anomaly localization with AUROC/AP/F1-max/AUPRO metrics. Noise Ratio 20}
\label{table:mvtec_results_class_wise_pixel_nr40}
\resizebox{\textwidth}{!}{%
%
%
}
\end{table*}

\begin{table*}[t]
\centering
\caption{Per-class performance on VISA dataset for multi-class anomaly detection with AUROC/AP/F1-max metrics. Noise Ratio 0}
\label{table:visa_results_class_wise_image_nr0}
\resizebox{\textwidth}{!}{%
%
%
}
\end{table*}

\begin{table*}[t]
\centering
\caption{Per-class performance on VISA dataset for multi-class anomaly localization with AUROC/AP/F1-max/AUPRO metrics. Noise Ratio 0}
\label{table:visa_results_class_wise_pixel_nr0}
\resizebox{\textwidth}{!}{%
%
%
}
\end{table*}

\begin{table*}[t]
\centering
\caption{Per-class performance on VISA dataset for multi-class anomaly detection with AUROC/AP/F1-max metrics. Noise Ratio 2}
\label{table:visa_results_class_wise_image_nr2}
\resizebox{\textwidth}{!}{%
%
%
}
\end{table*}

\begin{table*}[t]
\centering
\caption{Per-class performance on VISA dataset for multi-class anomaly localization with AUROC/AP/F1-max/AUPRO metrics. Noise Ratio 2}
\label{table:visa_results_class_wise_pixel_nr2}
\resizebox{\textwidth}{!}{%
%
%
}
\end{table*}
\begin{table*}[t]
\centering
\caption{Per-class performance on VISA dataset for multi-class anomaly detection with AUROC/AP/F1-max metrics. Noise Ratio 5}
\label{table:visa_results_class_wise_image_nr5}
\resizebox{\textwidth}{!}{%
%
%
}
\end{table*}

\begin{table*}[t]
\centering
\caption{Per-class performance on VISA dataset for multi-class anomaly localization with AUROC/AP/F1-max/AUPRO metrics. Noise Ratio 5}
\label{table:visa_results_class_wise_pixel_nr5}
\resizebox{\textwidth}{!}{%
%
%
}
\end{table*}
\begin{table*}[t]
\centering
\caption{Per-class performance on Real-IAD dataset for single-class anomaly detection with AUROC/AP/F1-max metrics. Noise Ratio 0}
\label{table:realiad_results_class_wise_image_nr0}
\resizebox{\textwidth}{!}{%
%
%
}
\end{table*}

\begin{table*}[t]
\centering
\caption{Per-class performance on Real-IAD dataset for single-class anomaly localization with AUROC/AP/F1-max/AUPRO metrics. Noise Ratio 0}
\label{table:realiad_results_class_wise_pixel_nr0}
\resizebox{\textwidth}{!}{%
%
%
}
\end{table*}

\begin{table*}[t]
\centering
\caption{Per-class performance on Real-IAD dataset for single-class anomaly detection with AUROC/AP/F1-max metrics. Noise Ratio 10}
\label{table:realiad_results_class_wise_image_nr10}
\resizebox{\textwidth}{!}{%
%
%
}
\end{table*}

\begin{table*}[t]
\centering
\caption{Per-class performance on Real-IAD dataset for single-class anomaly localization with AUROC/AP/F1-max/AUPRO metrics. Noise Ratio 10}
\label{table:realiad_results_class_wise_pixel_nr10}
\resizebox{\textwidth}{!}{%
%
%
}
\end{table*}

\begin{table*}[t]
\centering
\caption{Per-class performance on Real-IAD dataset for single-class anomaly detection with AUROC/AP/F1-max metrics. Noise Ratio 20}
\label{table:realiad_results_class_wise_image_nr20}
\resizebox{\textwidth}{!}{%
%
%
}
\end{table*}

\begin{table*}[t]
\centering
\caption{Per-class performance on Real-IAD dataset for single-class anomaly localization with AUROC/AP/F1-max/AUPRO metrics. Noise Ratio 20}
\label{table:realiad_results_class_wise_pixel_nr20}
\resizebox{\textwidth}{!}{%
%
%
}
\end{table*}

\begin{table*}[ht]
\centering
\caption{Active label correction on MVTecAD dataset with Dinomaly baseline
}
\label{table:alc_dinomaly_mvtec}
\resizebox{\linewidth}{!}{
%
}
\end{table*}

\subsection{Additional Ablation}
The ablation with INP-Former as baseline is given in Tab.~\ref{table:abl_inpformer}.

\subsection{Analysis of the Selection Score}
\label{subsec:analysis_select_score}

Instead of using the widely used maximum pixel anomaly score for the image level anomaly score, we minimize the risk of outlying scores by using the average of the top $n\%$ of scores instead. We evaluate three values and report the results in Fig.~\ref{fig:selection}. Using 1\% yields the best performance across all metrics.

\subsection{Full Performance Results}
In the main paper, we reported class-averaged performance due to limited space. We report full class-wise performance results for anomaly detection experiments in Tab.~\ref{table:mvtec_results_class_wise_image_nr0} --\ref{table:realiad_results_class_wise_pixel_nr40}, and those corresponding to active label correction in Tab.~\ref{table:alc_dinomaly_mvtec}.

\end{document}